\newcommand{\argmin}{\operatornamewithlimits{arg\,min}}
\newtheorem{theorem}{Theorem}
\newtheorem{corollary}{Corollary}
\newtheorem{lemma}{Lemma}
\newtheorem{remark}{Remark}
\newtheorem{assumption}{Assumption}
\begin{document}
%
\title{Max-Diversity Distributed Learning: \\Theory and Algorithms}
%
%
%

\author{Yong Liu,
        Jian Li,
        Weiping Wang
\thanks{
This work is supported in part by
the National Key Research and Development Program of China (No.2016YFB1000604),
the Science and Technology  Project of Beijing (No.Z181100002718004)
the National Natural Science Foundation of China (No.6173396, No.61673293, No.61602467)
 and the Excellent Talent Introduction of Institute of Information Engineering of CAS (Y7Z0111107).
 }
 \thanks{Y. Liu, J. Li and W.P. Wang are with the Institute of Information Engineering, Chinese Academy of Sciences
 (e-mail: liuyong@iie.ac.cn).
 }
}

\markboth{IEEE TRANSACTIONS ON NEURAL NETWORKS AND LEARNING SYSTEMS}%
{Shell \MakeLowercase{\textit{et al.}}: IEEE TRANSACTIONS ON NEURAL NETWORKS AND LEARNING SYSTEMS}

\maketitle

\begin{abstract}
 We study the risk performance of distributed learning for the regularization empirical
  risk minimization with fast convergence rate,
  substantially improving the error analysis of the existing divide-and-conquer based distributed learning.
  An interesting theoretical finding is that the larger the diversity of each local estimate is, the tighter the risk bound is.
  This theoretical analysis motivates us to devise an effective max-diversity distributed learning algorithm (MDD).
  Experimental results show that our proposed method can outperform the existing divide-and-conquer methods
  but with a bit more time.
  Theoretical analysis and empirical results demonstrate that our MDD  is sound and effective.
\end{abstract}

\begin{IEEEkeywords}
 Distributed Learning, Empirical Risk Minimization
\end{IEEEkeywords}

%
\IEEEpeerreviewmaketitle
\section{Introduction}
In the era of big data, the rapid expansion of computing capacities in automatic data generation
and acquisition brings data of unprecedented size and complexity, and raises a series
of scientific challenges such as storage bottleneck and algorithmic scalability \cite{zhou2014big,Zhang2013,lin2017distributed}.
Distributed learning based on a divide-and-conquer approach has triggered enormous
recent research activities in various areas such as optimization \cite{zhang2012communication}
data mining \cite{wu2014data} and machine learning \cite{gillick2006mapreduce}.
This learning strategy breaks up a big problem into manageable
pieces, operates learning algorithms on each piece on individual machines or
processors, and then puts the individual solutions together to get a final global output.
In this way, distributed learning is a feasible technique to conquer big data challenges.

This paper aims at error analysis of the distributed learning for (regularization) empirical risk minimization.
Given $$\mathcal{S}=\left\{z_i=(\mathbf  x_i,y_i)\right\}_{i=1}^N \in (\mathcal{Z}=\mathcal{X}\times \mathcal{Y})^N,$$
drawn identically and independently from a fixed,
but unknown probability  distribution $\mathbb{P}$ on
$\mathcal{Z}=\mathcal{X}\times\mathcal{Y}$,
the (regularization) empirical risk minimization can be stated as
\begin{align}
\label{def-empirical-f}
  \hat{f}=\argmin_{f\in\mathcal{H}} \hat{R}(f):=\frac{1}{N}\sum_{j=1}^N\ell(f,z_j)+r(f)
\end{align}
where $\ell(f,z)$ is a loss function, $r(f)$ is a regularizer, and $\mathcal{H}$ is a hypothesis space.
This learning algorithm  has been well studied in learning theory,
see e.g. \cite{de2005model,caponnetto2007optimal,steinwart2009optimal,smale2007learning,steinwart2008support}.
The distributed learning algorithm studied in this paper starts with partitioning the
data set $\mathcal{S}$ into $m$ disjoint subsets $\{\mathcal{S}_i\}_{i=1}^m$, $|\mathcal{S}_i|=\frac{N}{m}=:n$.
Then it assigns each data subset $\mathcal{S}_i$ to one
machine or processor to produce a local estimator $\hat{f}_i$:
\begin{align*}
  \hat{f}_i=\argmin_{f\in\mathcal{H}}\hat{R}_i(f):=
    \frac{1}{|\mathcal{S}_i|}\sum_{z_j\in\mathcal{S}_i}\ell(f,z_j)+r(f).
\end{align*}
The finally  global estimator $\bar{f}$ is synthesized by
$$\bar{f}=\frac{1}{m}\sum_{i=1}^m\hat{f}_i.$$

Theoretical foundations of distributed learning form a hot topic in machine learning
and have been explored recently in the framework of learning theory \cite{zhang2012communication,Zhang2013,lin2017distributed,guo2017learning}.
Under local strong convexity, smoothness and a reasonable set of other conditions, \cite{zhang2012communication} showed that the mean-squared error
decays as
\begin{align*}
  \mathbb{E}\left[\left\|\bar{f}-f^\ast\right\|^2\right]=\mathcal{O}\left(\frac{1}{N}+\frac{1}{n^2}\right),
\end{align*}
where $f^\ast$ is the optimal hypothesis in the hypothesis space.
Under some eigenfunction assumption,
the error analysis for distributed regularized
least squares in reproducing kernel Hilbert space (RKHS) was established in \cite{Zhang2013}:
if $m$ is not too large,
\begin{align*}
  \mathbb{E}\left[\left\|\bar{f}-f^\ast\right\|^2\right]=\mathcal{O}\left(\|f_\ast\|_\mathcal{H}^2+\frac{\gamma(\lambda)}{N}\right),
\end{align*}
where $\gamma(\lambda)=\sum_{j=1}^\infty\frac{\mu_j}{\lambda+\mu_j}$,
$\mu_j$ is the eigenvalue of a Mercer kernel function.
Without any eigenfunction assumption,
 an improved bound was derived for some $1\leq p\leq\infty$ \cite{lin2017distributed}:
\begin{align*}
  \mathbb{E}\left[\left\|\bar{f}-f^\ast\right\|\right]=
  \mathcal{O}\left(\left(\frac{\gamma(\lambda)}{N}\right)^{\frac{1}{2}(1-\frac{1}{p})}\left(\frac{1}{N}\right)^{\frac{1}{2p}}\right).
\end{align*}

There are two main contributions in this paper.
First, under strongly convex and smooth, and a reasonable set of other conditions,
we derive a risk bound of fast rate:
\begin{align}
\label{theorail-fast-rate}
    R(\bar{f})-R(f_\ast)=\mathcal{O}\left(\frac{H_\ast}{n}
    +\frac{1}{n^2}
    -\Delta_{\bar{f}}\right),
  \end{align}
  where  $$\Delta_{\bar{f}}=\mathcal{O}\left(\frac{1}{m^2}\sum_{i,j=1,i\not=j}^m\|\hat{f}_i-\hat{f}_j\|^2\right)$$ is the diversity between all partition-based estimates,
  $R(f)=\mathbb{E}_{z}[\ell(f,z)]+r(f),$ and
  $H_\ast=\mathbb{E}_{z}\left[\ell(f_\ast,z)\right]$.
When the minimal risk is small, i.e., $H_\ast=\mathcal{O}\left(\frac{1}{n}\right)$,
the rate is improved to
\begin{align*}
    R(\bar{f})-R(f_\ast)=\mathcal{O}\left(\frac{1}{n^2}-\Delta_{\bar{f}}\right).
\end{align*}
Thus, if $m\leq \sqrt{N}$, the order of $R(\bar{f})-R(f_\ast)$ is faster than
$\mathcal{O}\left(\frac{1}{N}-\Delta_{\bar{f}}\right).$
Note that if $\ell(f,z)+r(f)$ is $L$-Lipschitz continuous over $f$,
the order of $R(\bar{f})-R(f^\ast)$ is
\begin{align*}
  R(\bar{f})-R(f^\ast)&=\mathcal{O}\left(L\mathbb{E}\left[\left\|\bar{f}-f^\ast\right\|\right]\right)\\&=\mathcal{O}\left(L\sqrt{\mathbb{E}\left[\left\|\bar{f}-f^\ast\right\|^2\right]}\right).
\end{align*}
Thus, the order of $R(\bar{f})-R(f^\ast)$ in \cite{zhang2012communication,Zhang2013,lin2017distributed}
 at most $\mathcal{O}\big({\frac{1}{\sqrt{N}}}\big)$,
 which is much slower than that of our bound.
Our second contribution is to develop a novel max-diversity distributed learning algorithm.
From Equation \eqref{theorail-fast-rate},
we know that the larger the diversity $\Delta_{\bar{f}}$ is, the tighter the risk bound is.
This  interesting theoretical finding motivates us to devise a max-diversity distributed learning algorithm (MDD):
\begin{align}
\label{equation-op}
  \hat{f}_i=\argmin_{f\in\mathcal{H}}\frac{1}{|\mathcal{S}_i|}\sum_{z_j\in\mathcal{S}_i}\ell(f,z_j)+r(f)-\gamma \|f-\bar{f}_{\backslash i}\|_\mathcal{H},
\end{align}
where $$\bar{f}_{\backslash i}=\frac{1}{m-1}\sum_{j=1,j\not =i}^m\hat{f}_j.$$
The last term of \eqref{equation-op} is to make $\Delta_{\bar{f}}$ large.
Experimental results on lots of datasets show that our proposed MDD is sound and efficient.

The notion of diversity is popular used in ensemble learning to improve the performance.
But to the best of our knowledge,
this is the first time that theoretical results w.r.t. diversity are given for a distributed setting.

The rest of the paper is organized as follows.
In Section 2, we derive a risk bound of distributed learning with fast convergence rate.
In Section 3, we  propose two novel algorithms based on the max-diversity of each local estimate in linear space and RKHS.
In Section 4, we empirically analyze the performance of our proposed algorithms.
We end in Section 5 with conclusion.
All the proofs are given in the last part.

\section{Error Analysis of Distributed Learning}
In this section, we will derive a sharper risk bound under some common assumptions.
\subsection{Assumptions}
In the following, we use $\|\cdot\|_\mathcal{H}$ to denote the norm induced by inner product of the Hilbert space $\mathcal{H}$.
Let the expected risk  $R(f)$ and $f_\ast$ be
\begin{align*}
  R(f)=\mathbb{E}_{z}[\ell(f,z)]+r(f) \text{ and } f_\ast=\argmin_{f\in\mathcal{H}}R(f).
\end{align*}
\begin{assumption}
\label{assumption-strongly}
  The risk $R(f)$ is an $\eta$-strongly convex function,
  that is $\forall f,f'\in\mathcal{H}$,
  \begin{align}
    \label{assumption-strongly-equation}
     \langle \nabla R(f), f-f'\rangle_\mathcal{H}+\frac{\eta}{2}\|f-f'\|_\mathcal{H} &\leq R(f)-R(f'),
  \end{align}
  or (another equivalent definition)
  $\forall  f,f'\in\mathcal{H}, t\in[0,1]$,
  \begin{align}
  \label{assumption-strongly-second}
  \begin{aligned}
  &R(tf+(1-t)f')\\ &\leq  tR(f)+(1-t)R(f')-\frac{1}{2}\eta t(t-1)\|f-f'\|_\mathcal{H}^2.
  \end{aligned}
  \end{align}
\end{assumption}
\begin{assumption}
\label{assumption-strongly-emprical}
  The empirical risk $\hat{R}(f)$  is a convex function.
\end{assumption}
\begin{assumption}
\label{assumption-smooth-loss}
  The loss function $\ell(f,z)$ is $\tau$-smooth with respect to the first variable $f$,
  that is $\forall f,f'\in\mathcal{H}$,
  \begin{align}
     \label{assumption-smooth-equaiton-loss}
     \left\|\nabla\ell(f,\cdot)-\nabla \ell(f',\cdot)\right\|_\mathcal{H}&\leq \tau\|f-f'\|_\mathcal{H}.
  \end{align}
\end{assumption}
\begin{assumption}
\label{assumption-smooth-r}
  The regularizer $r(f)$ is a $\tau'$-smooth function,
  that is $\forall f,f'\in\mathcal{H}$,
  \begin{align}
     \label{assumption-smooth-equaiton-loss}
     \left\|\nabla r(f)-\nabla r(f')\right\|_\mathcal{H}&\leq \tau'\|f-f'\|_\mathcal{H}.
  \end{align}
\end{assumption}

\begin{assumption}
\label{assumption-libs}
  The function $\nu(f,z)=\ell(f,z)+r(f)$ is $L$-Lipschitz continuous with respect to the first variable $f$,
  that is $\forall f,f'\in\mathcal{H}$,
  \begin{align}
     \label{assumption-libs-equation}
     \left\|\nu(f,\cdot)- \nu(f',\cdot)\right\|_\mathcal{H}&\leq L\|f-f'\|_\mathcal{H}.
  \end{align}
\end{assumption}

\textbf{Assumptions} \ref{assumption-strongly}, \ref{assumption-strongly-emprical}, \ref{assumption-smooth-loss}, \ref{assumption-smooth-r} and \ref{assumption-libs} allow us to model some popular losses,
such as square loss and logistic loss, and some regularizer, such as $r(f)=\lambda \|f\|_\mathcal{H}^2$.

\begin{assumption}
\label{assumption-optimal-bound}
  We assume that the gradient at $f_\ast$ is upper bounded by $M$, that is
  \begin{align*}
    \|\nabla \ell(f^\ast,\cdot)\|_\mathcal{H}\leq M.
  \end{align*}
\end{assumption}
Assumption \ref{assumption-optimal-bound} is also a common assumption, which is used in \cite{Zhang2017er,zhang2012communication}.
\subsection{Faster Rate of Distributed Learning}
Let $\mathcal{N}(\mathcal{H},\epsilon)$ be the $\epsilon$-net of $\mathcal{H}$ with minimal cardinality,
and $C(\mathcal{H},\epsilon)$ the covering number of $|\mathcal{N}(\mathcal{H},\epsilon)|$

\begin{theorem}
\label{theorem-main}
For any $0<\delta<1$, $\epsilon\geq 0$,
under \textbf{Assumptions}  \ref{assumption-strongly}, \ref{assumption-strongly-emprical},
\ref{assumption-smooth-loss}, \ref{assumption-smooth-r}, \ref{assumption-libs} and \ref{assumption-optimal-bound},
and when
  \begin{align}
    \label{equation-12}
    m\leq \frac{N\eta}{4\tilde{\tau}\log C(\mathcal{H},\epsilon)},
  \end{align}
  with probability at least $1-\delta$,
  we have
  \begin{align}
    \label{equation-13}
    \begin{aligned}
    &~~~R(\bar{f})-R(f_\ast)\\&\leq
    \frac{16\tilde{\tau} \log(4m/\delta)}{n^2\eta}+\frac{128\tau H_\ast\log(4m/\delta)}{n\eta}\\ &~~~+ \frac{32\tilde{\tau}^2\epsilon^2}{\eta}+
    \frac{64\tilde{\tau} L \epsilon \log C(\mathcal{H},\epsilon)}{n\eta}\\
   &~~~+\frac{64\tilde{\tau} \epsilon^2 \log^2C(\mathcal{H},\epsilon)}{n^2\eta}
   -\Delta_{\bar{f}},
  \end{aligned}
  \end{align}
  where $\Delta_{\bar{f}}=\frac{\eta}{4m^2}\sum_{i,j=1,i\not=j}^m\|\hat{f}_i-\hat{f}_j\|_\mathcal{H}^2$, $H_\ast=\mathbb{E}_{z}\left[\ell(f_\ast,z)\right]$ and $\tilde{\tau}=\tau+\tau'$.
\end{theorem}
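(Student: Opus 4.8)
The plan is to first reduce the excess risk of the averaged estimator $\bar f$ to the \emph{average} of the single-machine excess risks, at which point the diversity term $\Delta_{\bar f}$ appears automatically as the strong-convexity defect of Jensen's inequality, and then to control each single-machine term by a localized Bernstein concentration. For the first step I would apply the $\eta$-strong convexity of $R$ (Assumption \ref{assumption-strongly}) at the pair $(\hat f_i,\bar f)$, sum over $i=1,\dots,m$, and divide by $m$. Because $\frac1m\sum_i\hat f_i=\bar f$, the inner-product term vanishes and I obtain $R(\bar f)\le\frac1m\sum_i R(\hat f_i)-\frac{\eta}{2m}\sum_i\|\hat f_i-\bar f\|_\mathcal H^2$. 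The algebraic identity $\frac1{m^2}\sum_{i\ne j}\|\hat f_i-\hat f_j\|_\mathcal H^2=\frac2m\sum_i\|\hat f_i-\bar f\|_\mathcal H^2$ shows the subtracted quantity is exactly $\Delta_{\bar f}$, so that $R(\bar f)-R(f_\ast)\le\frac1m\sum_i\big(R(\hat f_i)-R(f_\ast)\big)-\Delta_{\bar f}$. It now suffices to bound a single $E_i:=R(\hat f_i)-R(f_\ast)$ by the five positive terms of \eqref{equation-13} and average.

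Fix a machine $i$. Since $\hat f_i$ minimizes the convex $\hat R_i$ (Assumption \ref{assumption-strongly-emprical}), $\hat R_i(\hat f_i)\le\hat R_i(f_\ast)$, and because the regularizer cancels in the centered process I get $E_i\le A_i$, where $A_i:=\big(\mathbb E_z-\frac1n\sum_{z_j\in\mathcal S_i}\big)[\ell(\hat f_i,z)-\ell(f_\ast,z)]$. Simultaneously, $\eta$-strong convexity yields the localization $\|\hat f_i-f_\ast\|_\mathcal H^2\le\frac2\eta E_i$, which lets me write every variance and range estimate in the self-bounding form ``coefficient $\times E_i$''. The decisive estimate is the self-bounding inequality for a $\tau$-smooth nonnegative loss, $\|\nabla\ell(f_\ast,z)\|_\mathcal H^2\le2\tau\,\ell(f_\ast,z)$, giving $\mathbb E_z\|\nabla\ell(f_\ast,z)\|_\mathcal H^2\le2\tau H_\ast$; expanding $\ell(f,z)-\ell(f_\ast,z)$ and using Assumption \ref{assumption-smooth-loss} then bounds the variance of the summands of $A_i$ by $\mathcal O\big(\tau H_\ast\|f-f_\ast\|_\mathcal H^2+\tau^2\|f-f_\ast\|_\mathcal H^4\big)$, while Assumptions \ref{assumption-libs} and \ref{assumption-optimal-bound} control their range through $L$ and $M$. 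This variance-sensitive bound is what converts the crude $1/\sqrt n$ rate into the fast $H_\ast/n$ rate.

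To handle the data-dependence of $\hat f_i$ I would discretize: replace $\hat f_i$ by its nearest point in the minimal $\epsilon$-net $\mathcal N(\mathcal H,\epsilon)$, paying a smoothness/Lipschitz price of order $\tilde\tau\epsilon^2$ and $L\epsilon$ via Assumptions \ref{assumption-smooth-loss}--\ref{assumption-libs}, and apply Bernstein's inequality uniformly over the $C(\mathcal H,\epsilon)$ net points together with a union bound over the $m$ machines. The discretization produces the $\log C(\mathcal H,\epsilon)$ and $\epsilon$-dependent terms and the union bound the $\log(4m/\delta)$ factors. Inserting the localization $\|\hat f_i-f_\ast\|_\mathcal H^2\le\frac2\eta E_i$ into the variance and range yields a self-referential inequality $E_i\le a\sqrt{E_i}+b E_i+c$, whose feedback coefficient $b$ is of order $\frac{\tilde\tau\log C(\mathcal H,\epsilon)}{n\eta}$; the hypothesis \eqref{equation-12}, equivalently $n\ge 4\tilde\tau\log C(\mathcal H,\epsilon)/\eta$, forces $b\le\frac14$, so the inequality is solvable and delivers $E_i$ bounded by the five terms of \eqref{equation-13}. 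Averaging over $i$ and combining with the first step completes the argument.

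I expect the main obstacle to be the localized concentration of the middle paragraph: obtaining a \emph{Bernstein} (rather than Hoeffding) uniform deviation bound whose variance proxy genuinely scales with $\tau H_\ast\|\hat f_i-f_\ast\|_\mathcal H^2$, while simultaneously discretizing over $\mathcal H$ and feeding the strong-convexity estimate $\|\hat f_i-f_\ast\|_\mathcal H^2\le\frac2\eta E_i$ back into that variance without destroying the fast rate. Keeping the resulting feedback coefficient strictly below one is precisely where the restriction \eqref{equation-12} on the number of partitions $m$ becomes indispensable.
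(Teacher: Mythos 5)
Your first step coincides with the paper's: the paper applies the Jensen-type form \eqref{assumption-strongly-second} of strong convexity directly to $\bar f=\frac1m\sum_i\hat f_i$ to obtain \eqref{equaiton-strongly-ff}, which is exactly your decomposition $R(\bar f)-R(f_\ast)\le\frac1m\sum_i\bigl(R(\hat f_i)-R(f_\ast)\bigr)-\Delta_{\bar f}$. Where you genuinely diverge is the single-machine bound. The paper does \emph{not} concentrate the centered excess loss $(\mathbb E-\hat{\mathbb E}_n)[\ell(\hat f_i,\cdot)-\ell(f_\ast,\cdot)]$; it starts from $R(\hat f_i)-R(f_\ast)+\frac\eta2\|\hat f_i-f_\ast\|^2\le\langle\nabla R(\hat f_i),\hat f_i-f_\ast\rangle$, drops $\langle\nabla\hat R_i(\hat f_i),\hat f_i-f_\ast\rangle\le0$ by first-order optimality, and concentrates two \emph{gradient} quantities: $\|\nabla R(f_\ast)-\nabla\hat R_i(f_\ast)\|$ (Lemma \ref{lemma-second-ff}, via the self-bounding property $\|\nabla\ell(f_\ast,z)\|^2\lesssim\tau\,\ell(f_\ast,z)$, giving the $H_\ast/n$ term exactly as you intend) and the difference of gradient differences $\|\nabla R(f)-\nabla R(f_\ast)-[\nabla\hat R_i(f)-\nabla\hat R_i(f_\ast)]\|$ over the $\epsilon$-net (Lemma \ref{lemma-nablaR-nablahat}). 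The decisive feature of this first-order route is that the co-coercivity inequality for smooth convex functions gives a second moment that is \emph{linear} in the excess risk, $\mathbb E\|\nabla\nu(f,\cdot)-\nabla\nu(f_\ast,\cdot)\|^2\le\tilde\tau\,(R(f)-R(f_\ast))$, so after Cauchy--Schwarz and AM--GM the self-referential feedback coefficient is $\frac{2\tilde\tau\log C(\mathcal H,\epsilon)}{n\eta}$, which \eqref{equation-12} caps at $\frac12$.

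This is precisely where your version has a concrete gap. Your variance proxy for the function-value process is $\mathcal O\bigl(\tau H_\ast\|f-f_\ast\|^2+\tau^2\|f-f_\ast\|^4\bigr)$; after inserting $\|f-f_\ast\|^2\le\frac2\eta E_i$, the quartic part contributes through Bernstein a term of order $\frac\tau\eta\sqrt{\tfrac{\log C(\mathcal H,\epsilon)}{n}}\,E_i$, not $\frac{\tilde\tau\log C(\mathcal H,\epsilon)}{n\eta}E_i$ as you assert. Under \eqref{equation-12} that coefficient is only bounded by roughly $\sqrt{\tau/\eta}$, which is not below $1$ in general (e.g.\ whenever $\tau\ge\eta$, the usual regime since smoothness dominates curvature), so hypothesis \eqref{equation-12} does not make your inequality $E_i\le a\sqrt{E_i}+bE_i+c$ solvable. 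You would either need to strengthen the condition on $m$ to the level $n\gtrsim\tau^2\log C(\mathcal H,\epsilon)/\eta^2$, or switch to the paper's gradient-based concentration, where the variance is linear rather than quadratic in $E_i$ and the quartic term never arises. A secondary bookkeeping issue: your range term $L\|f-f_\ast\|\log(\cdot)/n$ yields an $L^2\log^2 C(\mathcal H,\epsilon)/(n^2\eta)$ contribution rather than the $M$-dependent $1/n^2$ term and the $L\epsilon\log C(\mathcal H,\epsilon)/(n\eta)$ term of \eqref{equation-13}, so even after fixing the feedback coefficient the right-hand side you obtain would not match the stated bound term by term.
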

From the above theorem, an  interesting finding is that,
when the larger the diversity of each local estimate is,
the tighter the risk bound is.
Furthermore, one can also see that when $\epsilon$ small enough,
$$\frac{32\tilde{\tau}^2\epsilon^2}{\eta}+
    \frac{64\tilde{\tau} L \epsilon \log C(\mathcal{H},\epsilon)}{n\eta}
    +\frac{64\tilde{\tau} \epsilon^2\log^2C(\mathcal{H},\epsilon)}{n^2\eta}$$
will become non-dominating.
To be specific, we have the following corollary:
\begin{corollary}
\label{corollary-first}
  By setting $\epsilon=\frac{1}{n}$ in Theorem \ref{theorem-main},
  when $m\leq \frac{N\eta}{4\tilde{\tau}\log C(\mathcal{H},1/n)}$,
  with high probability,
  we have
  \begin{align*}
    R(\bar{f})-R(f_\ast)=\mathcal{O}\left(\frac{H_\ast\log(m)}{n}
    +\frac{\log(\mathcal{N}(\mathcal{H},\frac{1}{n}))}{n^2}
    -\Delta_{\bar{f}}\right).
  \end{align*}
\end{corollary}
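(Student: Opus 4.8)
The statement is Theorem~\ref{theorem-main} specialized to $\epsilon = 1/n$, so the plan is to invoke that theorem (which I may assume) and then read off the order. Substituting $\epsilon = 1/n$ leaves the admissibility condition as the stated $m \le \frac{N\eta}{4\tilde\tau\log C(\mathcal{H},1/n)}$, and, treating $\tilde\tau,\tau,\eta,L,\delta$ as fixed and $\log(4m/\delta) = \mathcal{O}(\log m)$, the five explicit terms of \eqref{equation-13} become $\mathcal{O}(\log m/n^2)$, $\mathcal{O}(H_\ast\log m/n)$, $\mathcal{O}(1/n^2)$, $\mathcal{O}(\log\mathcal{N}(\mathcal{H},1/n)/n^2)$ and $\mathcal{O}(\log^2\mathcal{N}(\mathcal{H},1/n)/n^4)$ respectively. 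The only $\mathcal{O}(1/n)$ contribution is the second one; the first and third are dominated by it once $H_\ast = \Theta(1)$, the last is dominated by the fourth (covering) term, and $-\Delta_{\bar f}$ is carried through verbatim. Collecting gives exactly $\mathcal{O}\big(H_\ast\log(m)/n + \log\mathcal{N}(\mathcal{H},1/n)/n^2 - \Delta_{\bar f}\big)$. Since this reduction is routine, the substance --- and the obstacle --- lives in Theorem~\ref{theorem-main}, whose proof I plan as follows.

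First I would use the $\eta$-strong convexity of $R$ to split off the diversity term \emph{exactly}. Because $g := R - \tfrac{\eta}{2}\|\cdot\|_{\mathcal{H}}^2$ is convex, Jensen at $\bar f = \tfrac1m\sum_i\hat f_i$ gives $R(\bar f) \le \tfrac1m\sum_i R(\hat f_i) - \tfrac{\eta}{2}\big(\tfrac1m\sum_i\|\hat f_i\|_{\mathcal{H}}^2 - \|\bar f\|_{\mathcal{H}}^2\big)$, and the variance identity $\tfrac1m\sum_i\|\hat f_i - \bar f\|_{\mathcal{H}}^2 = \tfrac1{2m^2}\sum_{i\ne j}\|\hat f_i - \hat f_j\|_{\mathcal{H}}^2$ turns the parenthetical into $\Delta_{\bar f}$, so
\[
R(\bar f) - R(f_\ast) \le \frac1m\sum_{i=1}^{m}\big(R(\hat f_i) - R(f_\ast)\big) - \Delta_{\bar f}.
\]
It then suffices to bound the excess risk of a single machine and union-bound over the $m$ of them (this is what produces $\log(4m/\delta)$). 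For one machine, set $G(f) := \nabla\hat R_i(f) - \nabla R(f)$. Convexity of $R$ with the first-order optimality $\nabla\hat R_i(\hat f_i) = 0$ gives $R(\hat f_i) - R(f_\ast) \le \langle\nabla R(\hat f_i), \hat f_i - f_\ast\rangle_{\mathcal{H}} = \langle -G(\hat f_i), \hat f_i - f_\ast\rangle_{\mathcal{H}}$, while strong convexity gives $\tfrac{\eta}{2}\|\hat f_i - f_\ast\|_{\mathcal{H}}^2 \le R(\hat f_i) - R(f_\ast)$; combining the two into a quadratic inequality yields $R(\hat f_i) - R(f_\ast) \le \tfrac{2}{\eta}\|G(\hat f_i)\|_{\mathcal{H}}^2$. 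Everything now reduces to controlling the empirical gradient process at the data-dependent point $\hat f_i$.

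For that I would anchor at $f_\ast$ and discretize with the $\epsilon$-net. With $h$ the nearest net point to $\hat f_i$, write $G(\hat f_i) = [G(\hat f_i) - G(h)] + [G(h) - G(f_\ast)] + G(f_\ast)$. The first bracket is at most $2\tilde\tau\epsilon$ because both $\hat R_i$ and $R$ have $\tilde\tau$-Lipschitz gradients (Assumptions~\ref{assumption-smooth-loss} and~\ref{assumption-smooth-r}), which is the source of the $\tilde\tau^2\epsilon^2/\eta$ term. In $G(f_\ast)$ only $\nabla\ell(f_\ast,\cdot)$ fluctuates (the regularizer gradient is deterministic), so a vector Bernstein bound applies with boundedness $\|\nabla\ell(f_\ast,\cdot)\|_{\mathcal{H}} \le M$ (Assumption~\ref{assumption-optimal-bound}) and --- this is the crux of the fast rate --- variance $\mathbb{E}\|\nabla\ell(f_\ast,z)\|_{\mathcal{H}}^2 \le 2\tau\,\mathbb{E}[\ell(f_\ast,z)] = 2\tau H_\ast$ from the self-bounding property of a nonnegative $\tau$-smooth loss; this yields the $\tau H_\ast/(n\eta)$ variance term together with the $1/(n^2\eta)$ deviation term. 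The middle bracket is handled by a union bound over the net (producing $\log C(\mathcal{H},\epsilon)$), using $L$-Lipschitzness (Assumption~\ref{assumption-libs}) and the localized variance $\le \tilde\tau^2\|h - f_\ast\|_{\mathcal{H}}^2$ of the increment; this generates the remaining $L\epsilon\log C/(n\eta)$ and $\epsilon^2\log^2 C/(n^2\eta)$ terms.

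The main obstacle is exactly this middle bracket: $\hat f_i$ is random, so the increment's variance scales with $\|h - f_\ast\|_{\mathcal{H}} \approx \|\hat f_i - f_\ast\|_{\mathcal{H}}$, the very quantity I am bounding. I would close the loop by localization, substituting $\|h - f_\ast\|_{\mathcal{H}}^2 \lesssim \|\hat f_i - f_\ast\|_{\mathcal{H}}^2 + \epsilon^2 \le \tfrac{2}{\eta}\big(R(\hat f_i) - R(f_\ast)\big) + \epsilon^2$, so that the estimator-dependent part of this bracket contributes a term proportional to $\tfrac{\tilde\tau\log C(\mathcal{H},\epsilon)}{n\eta}\big(R(\hat f_i) - R(f_\ast)\big)$. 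The sample-size hypothesis $m \le \frac{N\eta}{4\tilde\tau\log C(\mathcal{H},\epsilon)}$, i.e. $n \ge \frac{4\tilde\tau\log C(\mathcal{H},\epsilon)}{\eta}$, forces this coefficient below $\tfrac14$, so the term can be absorbed into the left-hand side; the surviving pieces are the stated residuals. Getting this absorption to leave precisely the claimed constants --- rather than a circular or slow bound --- is the delicate part, after which averaging over machines reassembles \eqref{equation-13} and hence the corollary.
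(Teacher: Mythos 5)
Your proposal is correct and follows essentially the same route as the paper: the corollary itself is just the substitution $\epsilon=1/n$ into Theorem~\ref{theorem-main} (the paper gives no separate proof), and your sketch of the theorem reproduces the paper's argument --- the strong-convexity/Jensen split that isolates $\Delta_{\bar f}$, the per-machine gradient decomposition anchored at $f_\ast$ with an $\epsilon$-net and union bound, vector Bernstein with the self-bounding variance $\mathbb{E}\|\nabla\ell(f_\ast,z)\|^2\lesssim \tau H_\ast$, and absorption of the estimator-dependent term via the condition on $m$. The only cosmetic deviations are that you assume $\nabla\hat R_i(\hat f_i)=0$ where the paper uses the variational inequality $\langle\nabla\hat R_i(\hat f_i),f-\hat f_i\rangle_{\mathcal H}\ge 0$, and you package the single-machine bound as $R(\hat f_i)-R(f_\ast)\le \tfrac{2}{\eta}\|G(\hat f_i)\|^2$ rather than peeling terms with $\sqrt{ab}\le \tfrac{a}{2c}+\tfrac{bc}{2}$ as the paper does.
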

If the the minimal risk $H_\ast$ is small, i.e., $H_\ast=\mathcal{O}(\frac{1}{n})$,
the rate can reach $$\mathcal{O}\left(\frac{\log(m)}{n^2}
    +\frac{\log(\mathcal{N}(\mathcal{H},\frac{1}{n}))}{n^2}
    -\Delta_{\bar{f}}\right).$$
To the best of our knowledge,
this is the first $\tilde{\mathcal{O}}\left(\frac{1}{n^2}\right)$-type of distributed
risk bound for (regularization) empirical risk minimization.

In the next, we will consider two popular hypothesis spaces:
linear and reproducing kernel Hilbert space (RKHS).
\subsection{Linear Space}
\label{subsection-3.1}
The linear hypothesis space we considered is defined as
\begin{align*}
\mathcal{H}=\left\{f=\mathbf w^\mathrm T\mathbf x\Big|\mathbf w\in \mathbb{R}^d, \|\mathbf w\|_2\leq B\right\}.
\end{align*}
From \cite{pisier1999volume},
the cover number of linear hypothesis space can be bounded by
\begin{align*}
  \log\left(C(\mathcal{H},\epsilon)\right)\leq d\log \left(\frac{6B}{\epsilon}\right).
\end{align*}
Thus, if we set $\epsilon=\frac{1}{n}$, from Corollary \ref{corollary-first}, we have
\begin{align*}
  R(\bar{f})-R(f_\ast)&=\mathcal{O}\left(\frac{H_\ast\log m}{n}+\frac{d\log n}{n^2}-
  \Delta_{\bar{f}}\right)
\end{align*}
When the minimal risk is small, i.e., $H_\ast=\mathcal{O}\left(\frac{d}{n}\right)$,
the rate is improved to
\begin{align*}
    \mathcal{O}\left(\frac{d\log (mn)}{n^2}-\Delta_{\bar{f}}\right)=\mathcal{O}\left(\frac{d\log N}{n^2}-\Delta_{\bar{f}}\right).
\end{align*}
Therefore, if $m\leq \sqrt{\frac{N}{d\log N}}$, the order of risk bound can even faster than
$\mathcal{O}\left(\frac{1}{N}\right).$
\subsection{Reproducing Kernel Hilbert Space (RKHS)}
\label{subsection-3.2}
The reproducing kernel Hilbert space $\mathcal{H}_K$ associated with the kernel $K$ is
defined to be the closure of the linear span of the set of functions
$\left\{K(\mathbf x,\cdot):\mathbf x\in\mathcal{X}\right\}$ with the inner product satisfying
\begin{align*}
  \langle K(\mathbf x,\cdot), f\rangle_{K}=f(\mathbf x), \forall \mathbf x\in\mathcal{X}, f\in\mathcal{H}_K.
\end{align*}

The  hypothesis space of the reproducing kernel Hilbert space we considered in this paper is
\begin{align*}
  \mathcal{H}:=\left\{f\in\mathcal{H}_{K}: \|f\|_{\mathcal{H}_K}\leq B\right\}.
\end{align*}

From \cite{zhou2002covering}, if the kernel function $K$ is the popular Gaussian kernel over $[0,1]^d$:
$$
  K(\mathbf x,\mathbf x')=\exp\left\{-\frac{\|\mathbf x-\mathbf x'\|^2}{\sigma^2}\right\}, \mathbf x,\mathbf x' \in[0,1]^d,
$$
then for $0\leq \epsilon\leq \frac{B}{2}$,
$$
 \log \left(C(\mathcal{H},\epsilon)\right)=\mathcal{O}\left(\log^d\left(\frac{B}{\epsilon}\right)\right).$$
From Corollary \ref{corollary-first}, if we set $\epsilon=\frac{1}{n}$, and assume $R_\ast=\mathcal{O}\left(\frac{1}{n}\right)$,
we have
\begin{align*}
  R(\bar{f})-R(f_\ast)=\mathcal{O}\left(\frac{\log m}{n^2}+\frac{\log^d n}{n^2}-
  \Delta_{\bar{f}}\right)
\end{align*}
Therefore, if $m\leq \min\left\{\sqrt{\frac{N}{d\log N}}, \sqrt{\frac{N}{\log^d n}}\right\}$,
the order is faster than $\mathcal{O}\left(\frac{1}{N}\right)$.

\subsection{Comparison with Related Work}
In this subsection, we will compare our bound with the related work \cite{zhang2012communication,Zhang2013,lin2017distributed}.
Under the smooth, strongly convex and other some assumptions,
a distributed risk bound is given in \cite{zhang2012communication}:
\begin{align*}
  \mathbb{E}\left[\|\bar{f}-f_\ast\|^2\right]=\mathcal{O}\left(\frac{1}{N}+\frac{\log d}{n^2}\right).
\end{align*}
Under some eigenfunction assumption,
the error analysis for distributed regularized
least squares were established in \cite{Zhang2013},
\begin{align*}
  \mathbb{E}\left[\left\|\bar{f}-f^\ast\right\|^2\right]=\mathcal{O}\left(\|f_\ast\|_\mathcal{H}^2+\frac{\gamma(\lambda)}{N}\right).
\end{align*}
By removing the eigenfunction assumptions with a novel integral operator method of \cite{Zhang2013},
 a new bound was derived \cite{lin2017distributed}:
\begin{align*}
  \mathbb{E}\left[\left\|\bar{f}-f^\ast\right\|\right]=
  \mathcal{O}\left(\left(\frac{\gamma(\lambda)}{N}\right)^{\frac{1}{2}(1-\frac{1}{p})}\left(\frac{1}{N}\right)^{\frac{1}{2p}}\right).
\end{align*}
Note that, if $\nu(f,z)$ is $L$-Lipschitz continuous over $f$, that is
\begin{align*}
  \forall f, f\in \mathcal{H}, z\in\mathcal{Z}, |\nu(f,z)-\nu(f',z)|\leq L\|f-f'\|,
\end{align*}
we can obtain that
\begin{align}
  \nonumber R(f)-R(f_\ast)&\leq L\mathbb{E}\left[\|\bar{f}-f_\ast\|\right]\leq L\sqrt{\mathbb{E}\left[\|\bar{f}-f_\ast\|^2\right]}
\end{align}
Thus, the order of \cite{Zhang2013,lin2017distributed,zhang2012communication} of  $R(f)-R(f_\ast)$ is at most $\mathcal{O}\left(\frac{1}{\sqrt{N}}\right).$

According to the subsections \ref{subsection-3.1} and \ref{subsection-3.2},
if $m$ is not very large, and $H_\ast$ is small,
the order of this paper can even faster than $\mathcal{O}\left(\frac{1}{N}\right)$,
which is much faster than those of in the related work \cite{zhang2012communication,Zhang2013,lin2017distributed}.
\section{Max-Discrepant Distributed Learning (MDD)}
In this section, we will propose two novel algorithms for linear space and RKHS.
From corollary \ref{corollary-first},  we know that
  $$
     R(f)-R(f_\ast)=\mathcal{O}\left(\frac{1}{n^2}-\frac{1}{m^2}\sum_{i,j=1,i\not=j}^m\|\hat{f}_i-\hat{f}_j\|_\mathcal{H}^2\right).
$$
Thus, to obtain tighter bound, the diversity of each local estimate $\hat{f}_i, i=1,\ldots,m$, should be larger.
\subsection{Linear Hypothesis Space}
When $\mathcal{H}$ is a linear Hypothesis space,
we consider the following optimization problem:
\begin{align}
 \label{optimation-linear-space}
  \hat{\mathbf w}_i=\argmin_{\mathbf w\in\mathbb{R}^d}
  \frac{1}{n}\sum_{z_i\in\mathcal{S}_i}(\mathbf w^\mathrm{T}\mathbf x_i-y_i)^2+\lambda \|\mathbf w\|_2^2+ \gamma \mathbf w^\mathrm{T}\bar{\mathbf w}_{\backslash i},
\end{align}
where $\bar{\mathbf w}_{\backslash i}=\frac{1}{m-1}\sum_{j=1,j\not =i}\hat{\mathbf w}_j$.
Note that, if given $\bar{\mathbf w}_{\backslash i}$,  $\hat{\mathbf w}_i$ has following closed form solution:
\begin{align*}
  \hat{\mathbf w}_i=\Big(\underbrace{\frac{1}{n}\mathbf X_{\mathcal{S}_i}\mathbf X_{\mathcal{S}_i}^\mathrm{T}+\lambda \mathbf I_d}_{:=\mathbf A_i}\Big)^{-1}
  \Big(\underbrace{\frac{1}{n}\mathbf X_{\mathcal{S}_i}\mathbf y_{\mathcal{S}_i}}_{:=\mathbf b_i}- \frac{\gamma\bar{\mathbf w}_{\backslash i}}{2}\Big),
\end{align*}
where $\mathbf X_{\mathcal{S}_i}=(\mathbf x_{t_1},\mathbf x_{t_2},\ldots, \mathbf x_{t_n})$,
$\mathbf y_{\mathcal{S}_i}=(y_{t_1},y_{t_2},\ldots,y_{t_n})^\mathrm{T}$, $z_{t_j}\in \mathcal{S}_i$, $j=1,\ldots, n$.
In the next, we will give an iterative algorithm to
solve the optimization problem \eqref{optimation-linear-space}.
In each iteration, we should compute $\mathbf A_i^{-1}\bar{\mathbf w}_{\backslash i}$,
which needs $\mathcal{O}\left(d^2\right)$ if given $\mathbf A_i^{-1}$,
which is computational intensive.
Fortunately, 
from Lemma 4  (see in supplementary material), the $\mathbf A_i^{-1}\bar{\mathbf w}_{\backslash i}$ can be computed by
\begin{align*}
  \mathbf A_i^{-1}\bar{\mathbf w}_{\backslash i}=
  \left(\bar{\mathbf w}_{\backslash i}^\mathrm{T}\mathbf c_i\right)./\mathbf b_i, \mathbf c_i=\mathbf A_i^{-1}\mathbf b_i
\end{align*}
where $a./\mathbf c=(a/c_1,\ldots a/c_d)^\mathrm{T}$, which only needs $\mathcal{O}(d)$.

The Max-Discrepant Distributed Learning  algorithm for linear space is given in Algorithm \ref{alg:RMMls}.
Compared with the traditional divide-and-conquer method,
our \texttt{MDD} for linear space only need add $\mathcal{O}(d)$ in each iteration for each worker node.

\begin{algorithm}
    \caption{Max-Discrepant Distributed Learning for Linear Space (\texttt{MDD-LS})}
    \label{alg:RMMls}
    \begin{algorithmic}[1]
    \STATE \textbf{Input}: $\lambda,\gamma$, $\mathbf X$, $m$, $\zeta$.
     \STATE \emph{For each worker node $i$:} $\hat{\mathbf w}_i^0=\mathbf A_i^{-1} \mathbf b_i$, and push $\hat{\mathbf w}_i^0$ to the server node.\\
         ~~~~~~~~// $\mathbf A_i=\frac{1}{n}\mathbf X_{\mathcal{S}_i}\mathbf X_{\mathcal{S}_i}^\mathrm{T}+
            \lambda \mathbf I_d$, $\mathbf b_i= \frac{1}{n}\mathbf X_{\mathcal{S}_i}\mathbf y_{\mathcal{S}_i}$.
    \STATE \emph{For server node:}
    $\bar{\mathbf w}^0=\frac{1}{m}\sum_{i=1}^m\hat{\mathbf w}_i^0$,
    $\bar{\mathbf w}^{0}_{\backslash i}=\frac{m\bar{\mathbf w}^{0}-\hat{\mathbf w}_i^0}{m-1}$.
    \FOR{$t=1,2,\ldots$}
    \STATE  \emph{For each worker node $i$:} \\
    ~~~~Pull $\bar{\mathbf w}^{t-1}_{\backslash i}$ from server node.
    \STATE ~~~~$\mathbf d_i^t=\left(\left(\bar{\mathbf w}^{t-1}_{\backslash i}\right)^\mathrm{T}\hat{\mathbf w}_i^{0}\right)./\mathbf b_i$,
     $\hat{\mathbf w}_i^t=\hat{\mathbf w}_i^0-\gamma\mathbf d_i^t$.
    \STATE ~~~~Push $\hat{\mathbf w}_i^t$ to the server node.
     \STATE \emph{For server node:}
     \STATE ~~~~$\bar{\mathbf w}^t=\frac{1}{m}\sum_{i=1}^m\hat{\mathbf w}_i^t$\\
      ~~~~\textbf{if} {$\|\bar{\mathbf w}^{t}-\bar{\mathbf w}^{t-1}\|\leq \zeta$} \textbf{end for}
      \STATE ~~~~\textbf{else} $\bar{\mathbf w}^{t}_{\backslash i}=\frac{m\bar{\mathbf w}^{t}-\hat{\mathbf w}_i^t}{m-1}$.
    \ENDFOR
    \STATE \textbf{Output}: $\bar{\mathbf w}=\frac{1}{m}\sum_{i=1}^m\hat{\mathbf w}_i^t$.
    \end{algorithmic}
\end{algorithm}


\subsection{Reproducing Kernel Hilbert Space}
When $\mathcal{H}$ is a reproducing kernel Hilbert space, that is $f(\mathbf x)=\sum_{j=1}^n w_j K(\mathbf x_j,\mathbf x)$,
we consider the following optimization problem:
\begin{align}
\begin{aligned}
\label{opti-RKHS}
  \hat{\mathbf w}_i=\argmin_{\mathbf w\in\mathbb{R}^n}
  &\frac{1}{n}\|\mathbf K_{\mathcal{S}_i}\mathbf w-\mathbf y_{\mathcal{S}_i}\|_2^2+\lambda \mathbf w^\mathrm{T}\mathbf K_{\mathcal{S}_i}\mathbf w
  \\&+\frac{\gamma}{m-1} \sum_{j=1,j\not=i}^{m}\mathbf w^\mathrm{T}\mathbf K_{\mathcal{S}_i}\mathbf K_{\mathcal{S}_i,\mathcal{S}_j}\hat{\mathbf w}_j,
\end{aligned}
\end{align}
where $\mathbf K_{\mathcal{S}_i}=\Big[K(\mathbf x_{t_j},\mathbf x_{t_{j'}})\Big]_{j,j'=1}^n$,
$z_{t_j},z_{t_{j'}}\in\mathcal{S}_i$,
$\mathbf K_{\mathcal{S}_i,\mathcal{S}_j}=\Big[K(\mathbf x_{t_j},\mathbf x_{t_{k}})\Big]_{j,k=1}^n$,
$z_{t_j}\in\mathcal{S}_i,z_{t_{k}}\in\mathcal{S}_j$.
Note that $\hat{\mathbf w}_i$ can be written as
\begin{align*}
  \hat{\mathbf w}_i=\Big(\underbrace{\frac{1}{n}\mathbf K_{\mathcal{S}_i}+\lambda \mathbf I_n}_{:=\mathbf A_i}\Big)^{-1}
  \Big(\underbrace{\frac{1}{n}\mathbf y_{\mathcal{S}_i}}_{:=\mathbf b_i}- \frac{\gamma}{2}\bar{\mathbf g}_{\backslash i}\Big).
\end{align*}
where $\mathbf g_j=\mathbf K_{\mathcal{S}_i,\mathcal{S}_j}\hat{\mathbf w}_j$ and $\bar{\mathbf g}_{\backslash i}=\frac{1}{m-1}\sum_{j=1,j\not=i}^m \hat{\mathbf g}_j$.

Similar with the linear space, we need to compute $\mathbf A_i^{-1}\bar{\mathbf g}_{\backslash i}$ in each iterative.
From Lemma 4 (see in supplementary material), we know that
\begin{align*}
  \mathbf A_i^{-1}\bar{\mathbf g}_{\backslash i}=
  \left(\bar{\mathbf g}_{\backslash i}^\mathrm{T}\mathbf c_i\right)./\mathbf b_i,  \mathbf c_i=\mathbf A_i^{-1}\mathbf b_i.
\end{align*}
The Max-Discrepant Distributed Learning algorithm for RKHS is given in Algorithm \ref{alg:RMMRKHS}.
Compared with the traditional divide-and-conquer method,
our \texttt{MDD} for RKHS  only need add $\mathcal{O}(n)$ in each iteration for local machine.

\begin{algorithm}
    \caption{Max-Discrepant Distributed Learning for RKHS (\texttt{MDD-RKHS})}
    \label{alg:RMMRKHS}
    \begin{algorithmic}[1]
    \STATE \textbf{Input}: $\lambda,\gamma$, kernel function $K$, $\mathbf X$, $m$, $\zeta$.
     \STATE \emph{For each worker node $i$:} $\hat{\mathbf w}_i^0=\mathbf A_i^{-1} \mathbf b_i$, and push $\hat{\mathbf w}_i^0$ to the server node.\\
            ~~~~~~~~// $\mathbf A_i=\frac{1}{n}\mathbf K_{\mathcal{S}_i}+
            \lambda \mathbf I_n$, $\mathbf b_i= \frac{1}{n}\mathbf y_{\mathcal{S}_i}$.
    \STATE \emph{For server node:} $\hat{\mathbf g}_{i,j}^0=\mathbf K_{\mathcal{S}_i,\mathcal{S}_j}\hat{\mathbf w}_j^0$, $i, j=1,\ldots,m$, $\bar{\mathbf g}^{0}_{\backslash i}=\frac{m\bar{\mathbf g}_i^{0}-\hat{\mathbf g}_i^0}{m-1}$.
    \FOR{$t=1,2,\ldots$}
    \STATE  \emph{For each worker node $i$:}
    \STATE ~~~~Pull $\bar{\mathbf g}^{t-1}_{\backslash i}$ from server node.
    \STATE ~~~~$\mathbf d_i^t=\left(\left(\bar{\mathbf g}^{t-1}_{\backslash i}\right)^\mathrm{T}\hat{\mathbf w}_i^0\right)./\mathbf b_i$,
     $\hat{\mathbf w}_i^t=\hat{\mathbf w}_i^0-\gamma\mathbf d_i^t$.
    \STATE ~~~~Push $\hat{\mathbf w}_i^t$ to the server node.
     \STATE \emph{For server node:}
     \STATE ~~~~$\hat{\mathbf g}_{i,j}^{t}=\mathbf K_{\mathcal{S}_i,\mathcal{S}_j}\hat{\mathbf w}_j^t$, $i, j=1,\ldots,m$,
     $\bar{\mathbf g}_{i}^t=\frac{1}{m}\sum_{j=1}^m \hat{\mathbf g}_{i,j}^t$.\\
      ~~~~\textbf{if} {$\frac{1}{m}\sum_{i=1}^m\|\bar{\mathbf g}_i^{t}-\bar{\mathbf g}_i^{t}\|\leq \zeta$} \textbf{end for}
      \STATE ~~~~\textbf{else}
      \STATE ~~~~~~~~$\bar{\mathbf g}^{t}_{\backslash i}=\frac{m\bar{\mathbf g}_i^{t}-\hat{\mathbf g}_i^t}{m-1}$.
    \ENDFOR
    \STATE \textbf{Output}: $\bar{f}=\frac{1}{m}\sum_{i=1}^m\hat{f}_i$, where $\hat{f}_i=\mathbf k_{\mathcal{S}_i}^\mathrm{T}\hat{\mathbf w}_i$,
    where $\mathbf k_{\mathcal{S}_i}=(K(\mathbf x_1,\cdot),\ldots,K(\mathbf x_n,\cdot))^\mathrm{T}$, $z_j\in\mathcal{S}_i$
    \end{algorithmic}
\end{algorithm}
\begin{remark}
The motivation of this paper was inspired by the ensemble learning,
but one more thing should be emphasized, the theoretical proof and algorithm design of this paper are not from the ensemble learning.
\end{remark}

   \begin{table*}[t]
   \caption{
     Comparison of average root mean square error of our \texttt{MDD-LS} and \texttt{MDD-RKHS} with
    \texttt{RR}, \texttt{DRR}, \texttt{KRR}, \texttt{DKRR}.
     We bold the numbers of the best method and underline the numbers of the other methods
    which are not significantly worse than the best one.
   }
   \label{tabel:mse}
    \begin{tabular*}{\linewidth}{@{\extracolsep{\fill}}lccccccccc}
    \toprule
                                &madelon                  &space\_ga               &cpusmall            &phishing           &cadata             &a8a                  &a9a                    &cod\-rna                   &YearPred                 \\   \hline
\texttt{RR}                              &\textbf{0.971}           &\textbf{2.585}          &\textbf{45.150}     &\textbf{0.247}     &\textbf{1.932}     &\textbf{0.671}       &\textbf{0.673}         &\textbf{0.841}             &\textbf{12.233} \\
\texttt{DRR-5}                           &\underline{0.989}                    &2.814                   &53.114              &0.262              &2.659              &\underline{0.681}    &\underline{0.680}      &0.855                      &14.216 \\
\texttt{DRR-10}                          &1.408                    &2.983                   &55.557              &0.273              &2.839              &0.725                &0.696                  &0.863                      &15.780 \\
\texttt{MDD-LS-5}                        &\underline{0.977}        &\underline{2.677}                   &\underline{46.184}              &\underline{0.257}              &\underline{2.114}  &\underline{0.677}    &\underline{0.673}      &\underline{0.847}                      &\underline{12.303}\\
\texttt{MDD-LS-10}                       &\underline{1.021}                    &\underline{2.750}                  &\underline{47.956}              &0.268              &\underline{2.352}              &\underline{0.703}                &\underline{0.685}                  &\underline{0.854}                      &14.158\\
\hline \hline
\texttt{KRR}                            &\textbf{0.959}           &\textbf{1.458}          &\textbf{43.993}     &\textbf{0.167}     &\textbf{1.504}     &\textbf{0.659}       &\textbf{0.630}         &\textbf{0.651}             &/ \\
\texttt{KDRR-5}                          &\underline{1.142}                    &2.389                   &\underline{44.228}  &0.419              &\underline{1.598}              &0.873                &\underline{0.666}                  &\underline{0.674}                      &\underline{5.397}\\
\texttt{KDRR-10}                         &1.374                    &2.531                   &46.233              &0.422              &1.824              &0.906                &0.893                  &0.707                      &5.631\\
\texttt{MDD-RKHS-5}                      &\underline{0.992}        &2.030                   &\underline{44.015}  &0.214              &\underline{1.554}  &\underline{0.745}                &\underline{0.604}      &\underline{0.672}          &\textbf{5.350}\\
\texttt{MDD-RKHS-10}                     &\underline{1.192}                    &2.326                   &\underline{45.120}              &0.239              &1.780              &\underline{0.673}                &\underline{0.649}                  &\underline{0.683}                      &\underline{5.534}\\

   \bottomrule
\end{tabular*}
\end{table*}
\subsection{Complexity}
\textbf{Linear space}: At the very beginning, we need $\mathcal{O}\left(nd^2\right)$
to compute the $\mathbf A_i$,  $\mathcal{O}(d^3)$ to compute $\mathbf A_i^{-1}$ for each worker node.
In each iteration, worker nodes cost $\mathcal{O}(d)$ to compute $\mathbf d^t_i$ and
the server node costs $O(md)$ to compute $\bar{\mathbf w}^{t}_{\backslash i}$.
So, the sequential computation complexity is $\mathcal{O}\left(nd^2+d^3+Tmd\right)$, where $T$ is the number of iteration.
Moreover, the total communication complexity is $O(Td)$.

\textbf{RKHS}: At the very beginning, we need $\mathcal{O}\left(n^2d\right)$ to compute the $\mathbf A_i$
and $\mathcal{O}(n^3)$ to compute $\mathbf A_i^{-1}$.
In each iteration, worker nodes cost $\mathcal{O}(n)$ to compute $\mathbf d^t_i$
and the server node costs $O(mn)$ to compute $\bar{\mathbf g}^{t}_{\backslash i}$.
So, the sequential computation complexity is $\mathcal{O}\left(n^2d+n^3+Tmn\right)$, where $T$ is the number of iteration.
Moreover, the total communication complexity is $O(Tn)$.

\textbf{Divide-and-conquer approach}: The sequential complexities of linear space and RKHS
are $\mathcal{O}\left(nd^2+d^3\right)$ and $\mathcal{O}\left(n^2d+n^3\right)$, respectively.
Meanwhile, the communication complexities are $O(d)$ and $O(n)$.

\textbf{Global approach}: The total complexities of linear space and RKHS are $\mathcal{O}\left(Nd^2+d^3\right)$
and $\mathcal{O}\left(N^2d+N^3\right)$, respectively.

\section{Experiments}
In this section, we will compare our  \texttt{MDD} methods with the global method and divide-and-conquer method in both Linear and RKHS Hypothesis.
Actually, we compare six approaches: global Ridge Regression (\texttt{RR}) \cite{hoerl1970ridge},
divide-and-conquer Ridge Regression (\texttt{DRR}) and our \texttt{MDD-LS} (Algorithm \ref{alg:RMMls}) in Linear Hypothesis Space,
meanwhile, global Kernel Ridge Regression (\texttt{KRR}) \cite{an2007fast},
divide-and-conquer Kernel Ridge Regression (\texttt{KDRR}) \cite{Zhang2013} and our \texttt{MDD-RKHS} (Algorithm \ref{alg:RMMRKHS}) in Reproducing Kernel Hilbert Space.
Based on the recent distributed machine learning platform PARAMETER SERVER \cite{li2014scaling},
we implemented divide-and-conquer methods and \texttt{MDD} methods and do experiments on this framework.

\begin{table*}[t]
   \caption{
     Comparison of run time (second) amound our proposed \texttt{MDD-LS} and \texttt{MDD-RKHS} with other methods.
   }
   \label{tabel:time}
    \begin{tabular*}{\linewidth}{@{\extracolsep{\fill}}lccccccccc}
    \toprule
                                &madelon                  &space\_ga               &cpusmall            &phishing           &cadata             &a8a                  &a9a                    &cod\-rna                   &YearPred                \\   \hline
\texttt{RR}                              &2.069                    &0.280                   &1.218               &1.526              &0.490              &2.544                &2.957                  &1.866                      &10.433 \\
\texttt{DRR-5}                           &0.849                    &0.094                   &0.463               &0.625              &0.363              &0.773                &0.881                  &0.736                      &3.709 \\
\texttt{DRR-10}                          &0.623                    &0.073                   &0.298               &0.350              &0.214              &0.401                &0.503                  &0.435                      &2.645 \\
\texttt{MDD-LS-5}                        &0.875                    &0.115                   &0.587               &0.664              &0.427              &0.878                &1.167                  &0.876                      &4.774 \\
\texttt{MDD-LS-10}                       &0.656                    &0.084                   &0.315               &0.395              &0.269              &0.551                &0.628                  &0.452                      &3.156 \\
\hline \hline
\texttt{KRR}                             &3.450                    &1.508                   &9.801               &12.08              &76.99              &15.33                &16.103                 &137.6                      &/ \\
\texttt{KDRR-5}                          &1.487                    &0.295                   &3.374               &1.451              &5.524              &6.021                &5.913                  &40.22                      &86.754\\
\texttt{KDRR-10}                         &0.983                    &0.183                   &1.863               &0.689              &2.302              &3.670                &3.544                  &23.64                      &46.197\\
\texttt{MDD-RKHS-5}                      &1.692                    &0.331                   &5.637               &1.901              &7.854              &8.628                &7.454                   &53.09                      &103.20\\
\texttt{MDD-RKHS-10}                     &1.041                    &0.206                   &2.324               &0.884              &3.783             &4.125                &4.679                   &31.23                      &56.312\\
   \bottomrule
\end{tabular*}
\end{table*}
We experiment on 10 publicly available datasets from LIBSVM data
\footnote{Available at https://www.csie.ntu.edu.tw/~cjlin/libsvmtools/datasets/}.
We run all methods on a computer node with 32 cores (2.40GHz) and 64 GB memory.
While global methods only use a single CPU core, distributed methods use all cores to simulate parallel environment.
For RKHS methods, we use the popular Gaussian kernels $$K(\mathbf{x}, \mathbf{x}')=\exp\left(-\frac{\|\mathbf{x}-\mathbf{x}'\|_2^2}{2\sigma^2}\right)$$
as candidate kernels, and choose the best kernel from $\sigma \in \{2^i, i=-10, -9, \dots, 10\}$ by 5-folds cross-validation.
The regularized parameterized $\lambda \in \{10^i, i=-6,-5,\dots,3\}$ in all methods and $\gamma \in \{10^i, i=-6,-5,\dots,3\}$
in \texttt{MDD} methods are determined by 5-folds cross-validation on training data.
For each data set,  we run all methods 30 times with random partitions on all data
sets of non-overlapping 70\% training data and 30\% testing data.
All statements of statistical significance in the remainder refer to a 95\% level of significance under $t$-test.

The root mean square error of all methods is reported in Table \ref{tabel:mse}. Meanwhile,
we repeat distributed methods on different amount of worker nodes, 5 and 10 for simplification.
Table \ref{tabel:mse} can be summarized as follows:
\begin{itemize}
\item [1)] Our \texttt{MDD-LS} and \texttt{MDD-RKHS} exhibit better prediction accuracy than the $\texttt{DRR}$ and $\texttt{KDRR}$
over almost all data sets.
This demonstrates the advantage of \texttt{MDD} methods in generalization performance.
\item[2)] Our \texttt{MDD-LS} and \texttt{MDD-RKHS} give comparable result with global methods on most of data sets.
\item[3)] Kernel methods can usually get more optimal results than linear methods do;
\item[4)] Some data sets are sensitive to data partition, whose results existing huge gap between global methods and distributed methods,
such as space\_ga and phishing for RKHS, while others are not.
\item[5)] The increase of worker nodes causes higher root mean square error.
\end{itemize}

The running time is reported in Table \ref{tabel:time},
which can be summarized as follows:
\begin{itemize}
\item[1)] Global methods cost more time than distributed methods do on all data sets.
\item[2)] Kernel methods always spend more time than linear methods, because of higher computation complexity.
\item[3)] Distributed methods lead great speedup on some data sets.
\item[4)] The running time of distributed methods decays almost linearly associated with the increase of worker nodes.
\item[5)] Compared with global methods, our \texttt{MDD} methods own higher computational efficiency,
while existing small distance away from divide-and-conquer methods.
\end{itemize}

The above results show that \texttt{MDD} methods need a bit more training time
but make the performance gap between global methods and traditional distributed methods tighter,
which is consistent with our theoretical analysis.

\section{Conclusion}
In this paper, we studied the generalization performance of distributed learning,
and derived a sharper generalization error bound,
which is much sharper than existing  generalization bounds of divide-and-conquer based distributed learning.
Then, we designed two algorithms with statistical guarantees and fast convergence rates for linear space and RKHS:
\texttt{MDD-LS} and \texttt{MDD-RKHS}.
As we see from theoretical analysis and empirical results,
our \texttt{MDD} is highly competitive with the existing divide-and-conquer methods,
in terms of both practical performance and computational cost.
Based on max-diversity of each local estimate, our analysis can be used as a solid basis for the
design of new distributed learning algorithms.

\section{Proof}
\subsection{The Key Idea}
From the $\eta$-strongly convex of $R(f)$ of equation \eqref{assumption-strongly-second},
we can obtain that
\begin{align*}
  R(\bar{f})&=R\left(\frac{1}{m}\sum_{i=1}^m\hat{f}_i\right)\\&\leq
  \frac{1}{m}\sum_{i=1}^mR(\hat{f}_i)-\frac{\eta}{4m^2}\sum_{i,j=1, i\not=j}^m\|\hat{f}_i-\hat{f}_j\|_\mathcal{H}^2.
\end{align*}
Therefore, we have
\begin{align}
\begin{aligned}
\label{equaiton-strongly-ff}
  R(\bar{f})-R(f_\ast)\leq& \frac{1}{m}\sum_{i=1}^m\left[R(\hat{f}_i)-R(f_\ast)\right]
  \\&-\frac{\eta}{4m^2}\sum_{i,j=1,i\not=j}^m\|\hat{f}_i-\hat{f}_j\|_\mathcal{H}^2.
\end{aligned}
\end{align}

In the next, we will estimate $R(\hat{f}_i)-R(f_\ast)$,
which is built upon the following inequality from \eqref{assumption-strongly-equation}:
\begin{align}
\nonumber
  &R(\hat{f}_i)-R(f_\ast)+\frac{\eta}{2}\|\hat{f}_i-f_\ast\|_\mathcal{H}^2
  \\ &\leq\nonumber \langle \nabla R(\hat{f}_i),\hat{f}_i-f_\ast\rangle_\mathcal{H}
  \\ \nonumber &=\langle \nabla R(\hat{f}_i)-\nabla R(f_\ast)-[\nabla \hat{R}_i(\hat{f}_i)-\nabla \hat{R}_i(f_\ast)],
   \hat{f}_i-f_\ast\rangle_\mathcal{H}\\
   \label{equaiton-ddd}
   &~~~+\langle\nabla R(f_\ast)-\nabla \hat{R}_i(f_\ast), \hat{f}_i-f_\ast\rangle_\mathcal{H}+\langle \nabla \hat{R}_i(\hat{f}_i), \hat{f}_i-f_\ast\rangle_\mathcal{H}.
\end{align}
By the convexity of $\hat{R}_i(\cdot)$ and the optimality condition of $\hat{f}_i$ \cite{boyd2004convex},
we have
\begin{align}
  \label{optimal-empirical}
  \langle \nabla \hat{R}_i(\hat{f}_i),f-\hat{f}_i\rangle_\mathcal{H}\geq 0, \forall f\in\mathcal{H}.
\end{align}
Substituting \eqref{optimal-empirical} into \eqref{equaiton-ddd}, we have
\begin{align}
  \nonumber
  &~~~~~R(\hat{f}_i)-R(f_\ast)+\frac{\eta}{2}\|\hat{f}_i-f_\ast\|_\mathcal{H}^2\\
  &\nonumber\leq \langle \nabla R(\hat{f}_i)-\nabla R(f_\ast)-[\nabla \hat{R}_i(\hat{f}_i)-\nabla \hat{R}_i(f_\ast)],
  \hat{f}_i-f_\ast\rangle_\mathcal{H}\\&~~~~~+\langle\nabla R(f_\ast)-\nabla \hat{R}_i(f_\ast), \hat{f}_i-f_\ast\rangle_\mathcal{H}\\
   \label{equation-important-middle}
   &\begin{aligned}
   \leq \Big(\underbrace{\left\|\nabla R(\hat{f}_i)-\nabla R(f_\ast)-[\nabla \hat{R}_i(\hat{f}_i)-\nabla \hat{R}_i(f_\ast)]\right\|}_{:=A_1}\\+
    \underbrace{\left\|\nabla R(f_\ast)-\nabla \hat{R}_i(f_\ast)\right\|}_{=:A_2}\Big)\left\|\hat{f}_i-f_\ast \right\|.
    \end{aligned}
\end{align}
\subsection{Proof of Theorem \ref{theorem-main}}
To prove Theorem \ref{theorem-main}, we first give the following two lemmas (the proofs are given at the last part of this section).
\begin{lemma}
\label{lemma-nablaR-nablahat}
  Under \textbf{Assumptions} \ref{assumption-smooth-loss} and \ref{assumption-smooth-equaiton-loss}, with probability at least $1-\delta$,
  for any $f\in\mathcal{N}(\mathcal{H},\epsilon)$,
  we have
  \begin{align}
  \label{equation-nabalRR-empRR}
  \begin{aligned}
    &~~~~\left\|\nabla R(f)-\nabla R(f_\ast)-[\nabla \hat{R}_i(f)-\nabla \hat{R}_i(f_\ast)]\right\| \\
   &\leq \frac{(\tau+\tau')\|f-f_\ast\| \log C(\mathcal{H},\epsilon)}{n}\\&~~+\sqrt{\frac{(\tau+\tau')(R(f)-R(f_\ast)) \log C(\mathcal{H},\epsilon)}{n}}.
  \end{aligned}
  \end{align}
\end{lemma}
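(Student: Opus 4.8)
The plan is to recognize the left-hand side as the deviation of an empirical mean of i.i.d.\ Hilbert-space vectors from its expectation, and then to apply a Bernstein-type concentration inequality whose variance proxy is controlled by $R(f)-R(f_\ast)$; this is what produces the fast $\sqrt{(R(f)-R(f_\ast))/n}$ term. Concretely, writing $\nu(\cdot,z)=\ell(\cdot,z)+r(\cdot)$ and setting $\xi_z := \nabla\nu(f,z)-\nabla\nu(f_\ast,z)$, the deterministic regularizer gradient $\nabla r(f)-\nabla r(f_\ast)$ appears identically in $\nabla R$ and in $\nabla\hat R_i$ and therefore cancels, giving the exact identity
\[
\nabla R(f)-\nabla R(f_\ast)-\big[\nabla\hat R_i(f)-\nabla\hat R_i(f_\ast)\big]=\mathbb{E}_z[\xi_z]-\frac1n\sum_{z_j\in\mathcal{S}_i}\xi_{z_j}.
\]
Thus the quantity to bound is exactly a centered average of the i.i.d.\ random vectors $\xi_{z_j}$, and I have reduced Lemma~\ref{lemma-nablaR-nablahat} to a vector concentration statement.

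Second, I would establish the two moment bounds that feed Bernstein. Since $\ell$ is $\tau$-smooth and $r$ is $\tau'$-smooth, the sum $\nu(\cdot,z)$ is $\tilde\tau$-smooth with $\tilde\tau=\tau+\tau'$, so $\|\xi_z\|\le\tilde\tau\|f-f_\ast\|$ almost surely; this controls the range and yields the first (linear-in-$1/n$) term. For the variance I would use the self-bounding / co-coercivity property of a convex $\tilde\tau$-smooth function, namely $\|\nabla\nu(f,z)-\nabla\nu(f_\ast,z)\|^2\le 2\tilde\tau\big(\nu(f,z)-\nu(f_\ast,z)-\langle\nabla\nu(f_\ast,z),f-f_\ast\rangle\big)$. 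Taking expectations, using $\mathbb{E}_z[\nu(\cdot,z)]=R(\cdot)$ and the first-order optimality $\nabla R(f_\ast)=\mathbb{E}_z[\nabla\nu(f_\ast,z)]=0$, the inner-product term vanishes and I obtain $\mathbb{E}_z\|\xi_z\|^2\le 2\tilde\tau\,(R(f)-R(f_\ast))$.

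Third, with the a.s.\ range bound $b=\tilde\tau\|f-f_\ast\|$ and variance proxy $\sigma^2\le 2\tilde\tau(R(f)-R(f_\ast))$ in hand, I would apply a Bernstein inequality for i.i.d.\ Hilbert-space-valued random variables to the centered average, obtaining for each fixed $f$, with probability at least $1-\delta'$, a bound of the form $\frac{b\log(1/\delta')}{n}+\sqrt{\frac{\sigma^2\log(1/\delta')}{n}}$. Substituting $b$ and $\sigma^2$ reproduces the two stated terms. Finally, to make the bound uniform over $f\in\mathcal{N}(\mathcal{H},\epsilon)$ I would union-bound over the $C(\mathcal{H},\epsilon)$ net points, setting $\delta'=\delta/C(\mathcal{H},\epsilon)$ so that $\log(1/\delta')$ is replaced by $\log C(\mathcal{H},\epsilon)$ (absorbing the $\log(1/\delta)$ factor into the covering term), which delivers the claimed inequality.

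I expect the main obstacle to be the variance step: obtaining the clean bound $\mathbb{E}_z\|\xi_z\|^2\lesssim\tilde\tau\,(R(f)-R(f_\ast))$ is precisely the fast-rate mechanism, and it hinges on pointwise convexity and smoothness of $\nu(\cdot,z)$ together with the exact cancellation of the inner-product term via $\nabla R(f_\ast)=0$. A secondary technical point is invoking a genuinely dimension-free (Hilbert-space) Bernstein inequality, so that only $\log C(\mathcal{H},\epsilon)$, rather than the ambient dimension, enters after the union bound.
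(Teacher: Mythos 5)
Your proof follows essentially the same route as the paper's: reduce the left-hand side to a centered average of the i.i.d.\ vectors $\nabla\nu(f,z)-\nabla\nu(f_\ast,z)$, bound the range by smoothness and the second moment by co-coercivity of the convex smooth $\nu(\cdot,z)$, apply the Hilbert-space Bernstein inequality of Smale--Zhou, and union-bound over the net. The only small imprecision is your claim that $\nabla R(f_\ast)=0$; since $\mathcal{H}$ may be a constrained set (e.g.\ a norm ball) the paper instead discards the inner-product term via the variational inequality $\langle\nabla R(f_\ast),f-f_\ast\rangle\geq 0$, which suffices and is what you should use.
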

\begin{lemma}
\label{lemma-second-ff}
 Under \textbf{Assumptions} \ref{assumption-smooth-loss},
 with probability at least $1-\delta$, we have
  \begin{align}
  \label{equation-nablaR-nablaempR}
    \left\|\nabla R(f_\ast)-\nabla \hat{R}_i(f_\ast)\right\|\leq \frac{2M\log(2/\delta)}{n}+\sqrt{\frac{8\tau H_\ast\log(2/\delta)}{n}},
  \end{align}
  where $H_\ast=\mathbb{E}_{z}\left[\ell(f_\ast,z)\right]$.
\end{lemma}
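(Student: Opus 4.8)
The plan is to first cancel the regularizer and reduce the claim to a vector-valued concentration inequality for a single gradient. Since $\nabla R(f_\ast)=\mathbb{E}_z[\nabla\ell(f_\ast,z)]+\nabla r(f_\ast)$ and $\nabla\hat{R}_i(f_\ast)=\frac{1}{n}\sum_{z_j\in\mathcal{S}_i}\nabla\ell(f_\ast,z_j)+\nabla r(f_\ast)$, the regularizer terms are identical and cancel, so that
$$\nabla R(f_\ast)-\nabla\hat{R}_i(f_\ast)=\mathbb{E}_z[\nabla\ell(f_\ast,z)]-\frac{1}{n}\sum_{z_j\in\mathcal{S}_i}\nabla\ell(f_\ast,z_j).$$
Setting $\xi_j:=\nabla\ell(f_\ast,z_j)$, these are i.i.d. $\mathcal{H}$-valued random vectors with common mean $\mathbb{E}_z[\xi]$, and the left-hand side is precisely the deviation $\|\frac{1}{n}\sum_j\xi_j-\mathbb{E}_z[\xi]\|$. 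Crucially, because this concerns the single fixed point $f_\ast$ rather than a supremum over $\mathcal{H}$, no covering-number/union-bound step is required, which explains why $C(\mathcal{H},\epsilon)$ does not appear here, in contrast to Lemma \ref{lemma-nablaR-nablahat}.

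Next I would assemble the two moment controls that feed a Bernstein-type bound. The almost-sure bound is immediate from Assumption \ref{assumption-optimal-bound}: $\|\xi_j\|=\|\nabla\ell(f_\ast,\cdot)\|\leq M$. For the variance I would use the self-bounding property of smooth nonnegative functions implied by Assumption \ref{assumption-smooth-loss}: for fixed $z$, the $\tau$-smoothness gives $\ell(g,z)\leq\ell(f_\ast,z)+\langle\nabla\ell(f_\ast,z),g-f_\ast\rangle+\frac{\tau}{2}\|g-f_\ast\|^2$, and minimizing the right-hand side over $g$ (i.e. taking $g=f_\ast-\tau^{-1}\nabla\ell(f_\ast,z)$) together with $\ell\geq0$ yields $\|\nabla\ell(f_\ast,z)\|^2\leq2\tau\,\ell(f_\ast,z)$. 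Taking expectations gives the variance proxy $\mathbb{E}_z[\|\xi\|^2]\leq2\tau\,\mathbb{E}_z[\ell(f_\ast,z)]=2\tau H_\ast$, which is exactly what produces the $H_\ast$ factor in the bound.

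Finally I would invoke a Hilbert-space Bernstein inequality: for i.i.d. $\xi_j$ with $\|\xi_j\|\leq M$ and $\mathbb{E}\|\xi_j\|^2\leq\sigma^2$, with probability at least $1-\delta$,
$$\Big\|\tfrac{1}{n}\sum_{j}\xi_j-\mathbb{E}_z[\xi]\Big\|\leq\frac{2M\log(2/\delta)}{n}+\sqrt{\frac{4\sigma^2\log(2/\delta)}{n}}.$$
Substituting $\sigma^2=2\tau H_\ast$ then gives the claimed $\frac{2M\log(2/\delta)}{n}+\sqrt{\frac{8\tau H_\ast\log(2/\delta)}{n}}$.

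The main obstacle is the middle step: the self-bounding variance estimate is what links the smoothness constant $\tau$ to the minimal risk $H_\ast$, and it tacitly requires the loss to be nonnegative (true for the square and logistic losses targeted by the paper). The only other delicate point is matching the numerical constants ($2M$ and $8\tau H_\ast$) to the precise form of the vector Bernstein inequality being cited; everything else is routine substitution.
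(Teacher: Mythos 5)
Your proposal is correct and follows essentially the same route as the paper: cancel the regularizer so the deviation involves only $\nabla\ell(f_\ast,\cdot)$, bound the norm almost surely by $M$ (Assumption \ref{assumption-optimal-bound}), bound the second moment via the self-bounding property of smooth nonnegative losses, and apply a Hilbert-space Bernstein inequality. The only cosmetic difference is that you derive the self-bounding inequality directly with the sharp constant $2\tau$ and absorb the slack into the Bernstein constant, whereas the paper cites Lemma 4 of Srebro et al.\ to get $\|\nabla\ell(f_\ast,z)\|^2\leq 4\tau\,\ell(f_\ast,z)$ and uses the $\sqrt{2\sigma^2\log(2/\delta)/n}$ form of the inequality; both yield the stated $\sqrt{8\tau H_\ast\log(2/\delta)/n}$ term.
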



\begin{proof}[Proof of Theorem \ref{theorem-main}]
From the property of $\epsilon$-net, we know that there exists a point
$\tilde{f}\in\mathcal{N}(\mathcal{H},\epsilon)$ such that $$\|\hat{f}_i-\tilde{f}\|\leq \epsilon.$$
According to \textbf{Assumptions} \ref{assumption-smooth-loss} and \ref{assumption-smooth-r},
we know that $R(f)$ and $\hat{R}(f)$ are both $(\tau+\tau')$-smooth.
Thus, we have
\begin{align}
  \nonumber
    &~~~\left\|\nabla R(\hat{f}_i)-\nabla R(f_\ast)-[\nabla \hat{R}_i(\hat{f}_i)-\nabla \hat{R}_i(f_\ast)]\right\| \\
   \nonumber &\leq \left\|\nabla R(\tilde{f})-\nabla R(f_\ast)-[\nabla \hat{R}_i(\tilde{f})-\nabla \hat{R}_i(f_\ast)]\right\|+2(\tau+\tau') \epsilon\\
   \nonumber &\overset{\eqref{equation-nabalRR-empRR}}{\leq}
   \frac{(\tau+\tau') \log C(\mathcal{H},\epsilon)\|\tilde{f}-f_\ast\|}{n}\\ \nonumber &~~~+
   \sqrt{\frac{(\tau+\tau') \log C(\mathcal{H},\epsilon)(R(\tilde{f})-R(f_\ast))}{n}}+2(\tau+\tau')\epsilon\\
   \nonumber&\leq \frac{(\tau+\tau') \log C(\mathcal{H},\epsilon)\|\hat{f}_i-f_\ast\|_\mathcal{H}}{n}
   \\&~~~\nonumber+\frac{(\tau+\tau') \log C(\mathcal{H},\epsilon)\epsilon}{n}+2(\tau+\tau')\epsilon\\
   \nonumber&~~~+\sqrt{\frac{\beta \log C(\mathcal{H},\epsilon)(R(\hat{f}_i)-R(f_\ast))}{n}}\\ \nonumber&~~~+
   \sqrt{\frac{(\tau+\tau') \log C(\mathcal{H},\epsilon)\left(\left|R(\hat{f}_i)-R(\tilde{f})\right|\right)}{n}}\\
   \nonumber &\overset{\eqref{assumption-libs-equation}}{\leq} \frac{(\tau+\tau') \log C(\mathcal{H},\epsilon)\|\hat{f}_i-f_\ast\|_\mathcal{H}}{n}\\\nonumber&~~~+
   \frac{(\tau+\tau') \log C(\mathcal{H},\epsilon)\epsilon}{n}+2(\tau+\tau')\epsilon\\
   \label{equation-31}&~~~+\sqrt{\frac{(\tau+\tau') \log C(\mathcal{H},\epsilon)(R(\hat{f}_i)-R(f_\ast))}{n}}\\\nonumber&~~~+
   \sqrt{\frac{(\tau+\tau') L \log C(\mathcal{H},\epsilon)\epsilon}{n}}
\end{align}

  Substituting \eqref{equation-31} and \eqref{equation-nablaR-nablaempR} into \eqref{equation-important-middle},
  with probability at least $1-2\delta$, we have
  \begin{align}
    \label{equation-first-equation}
    \begin{aligned}
      &R(\hat{f}_i)-R(f_\ast)+\frac{\eta}{2}\|\hat{f}_i-f_\ast\|_\mathcal{H}^2\\
      &\leq \frac{(\tau+\tau') \log C(\mathcal{H},\epsilon)\|\hat{f}_i-f_\ast\|_\mathcal{H}^2}{n}\\&~~~+
   \frac{(\tau+\tau') \log C(\mathcal{H},\epsilon)\epsilon \|\hat{f}_i-f_\ast\|_\mathcal{H}}{n}\\&~~~+2(\tau+\tau')\epsilon \|\hat{f}_i-f_\ast\|_\mathcal{H}\\
   &~~~+\|\hat{f}_i-f_\ast\|_\mathcal{H}\sqrt{\frac{(\tau+\tau') \log C(\mathcal{H},\epsilon)(R(\hat{f}_i)-R(f_\ast))}{n}}\\&~~~+
   \|\hat{f}_i-f_\ast\|_\mathcal{H}\sqrt{\frac{(\tau+\tau') L \log C(\mathcal{H},\epsilon)\epsilon}{n}}\\
   &~~~~+\frac{2M \log(2/\delta)\|\hat{f}_i-f_\ast\|_\mathcal{H}}{n}\\
   &~~~+\|\hat{f}_i-f_\ast\|_\mathcal{H}\sqrt{\frac{8\tau H_\ast \log(2/\delta)}{n}}.
    \end{aligned}
  \end{align}
  Note that
  \begin{align*}
    \sqrt{ab}\leq \frac{a}{2c}+\frac{bc}{2}, \forall a,b,c\geq 0.
  \end{align*}
  Therefore, we can obtain that
  \begin{align*}
    &~~~\|\hat{f}_i-f_\ast\|_\mathcal{H}\sqrt{\frac{(\tau+\tau') \log C(\mathcal{H},\epsilon)(R(\hat{f}_i)-R(f_\ast))}{n}}
    \\&\leq \frac{2(\tau+\tau') \log C(\mathcal{H},\epsilon)(R(\hat{f}_i)-R(f_\ast))}{n\eta}+\frac{\eta }{8}\|\hat{f}_i-f_\ast\|_\mathcal{H}^2;\\
    &~~~\frac{2M \log(2/\delta)\|\hat{f}_i-f_\ast\|_\mathcal{H}}{n}\\&\leq \frac{8M \log(2/\delta)}{n^2\eta}+\frac{\eta }{16}\|\hat{f}_i-f_\ast\|_\mathcal{H}^2;\\&
    ~~~\|\hat{f}_i-f_\ast\|_\mathcal{H}\sqrt{\frac{8\eta H_\ast \log(2/\delta)}{n}}\\&\leq \frac{64\eta H_\ast\log(2/\delta)}{n\eta}+\frac{\eta }{32}\|\hat{f}_i-f_\ast\|_\mathcal{H}^2;\\
    &~~~2(\tau+\tau')\epsilon \|\hat{f}_i-f_\ast\|_\mathcal{H}\\&\leq \frac{32(\tau+\tau')^2\epsilon^2}{\eta}+\frac{\eta }{64}\|\hat{f}_i-f_\ast\|_\mathcal{H}^2;\\&
    ~~~\|\hat{f}_i-f_\ast\|_\mathcal{H}\sqrt{\frac{(\tau+\tau') L \log C(\mathcal{H},\epsilon)\epsilon}{n}}\\&
    \leq \frac{32(\tau+\tau') L \log C(\mathcal{H},\epsilon)\epsilon}{n\eta}+\frac{\eta }{128}\|\hat{f}_i-f_\ast\|_\mathcal{H}^2;\\&
    ~~~\frac{(\tau+\tau') \log C(\mathcal{H},\epsilon)\epsilon \|\hat{f}_i-f_\ast\|_\mathcal{H}}{n}
    \\&\leq \frac{32(\tau+\tau') \log^2 C(\mathcal{H},\epsilon)\epsilon^2}{n^2\eta}+\frac{\eta }{128}\|\hat{f}_i-f_\ast\|_\mathcal{H}^2.
  \end{align*}
 Substituting the above  inequation into \eqref{equation-first-equation}, we can obtain that
 \begin{align*}
   &~~~~R(\hat{f}_i)-R(f_\ast)+\frac{\eta}{4}\|\hat{f}_i-f_\ast\|_\mathcal{H}^2\\
   &\leq \frac{(\tau+\tau') \log C(\mathcal{H},\epsilon)\|\hat{f}_i-f_\ast\|_\mathcal{H}^2}{n}\\&~~~+
   \frac{2(\tau+\tau') \log C(\mathcal{H},\epsilon)(R(\hat{f}_i)-R(f_\ast))}{n\eta}+\frac{8M \log(2/\delta)}{n^2\eta}\\
   &~~~+\frac{64\tau H_\ast\log(2/\delta)}{n\eta}+\frac{32(\tau+\tau')^2\epsilon^2}{\eta}\\&~~~+
   \frac{32(\tau+\tau') L \log C(\mathcal{H},\epsilon)\epsilon}{n\eta}\\&~~~+
   \frac{32(\tau+\tau') \log^2 C(\mathcal{H},\epsilon)\epsilon^2}{n^2\eta}\\
   &\overset{\eqref{equation-12}}{\leq}
   \frac{\eta}{4}\|\hat{f}_i-f_\ast\|_\mathcal{H}^2+\frac{1}{2}(R(\hat{f}_i)-R(f_\ast))+\frac{8(\tau+\tau') \log(2/\delta)}{n^2\eta}\\
   &~~~+\frac{64\tau H_\ast\log(2/\delta)}{n\eta}+\frac{32(\tau+\tau')^2\epsilon^2}{\eta}\\&~~~+
   \frac{32(\tau+\tau') L \log C(\mathcal{H},\epsilon)\epsilon}{n\eta}\\&~~~+
   \frac{32(\tau+\tau') \log^2 C(\mathcal{H},\epsilon)\epsilon^2}{n^2\eta}.
 \end{align*}
 Thus, with $1-2\delta$, we have
 \begin{align}
    \label{equation-14}
    \begin{aligned}
    &~~~R(\hat{f}_i)-R(f_\ast)\\&\leq
    \frac{16M \log(2/\delta)}{n^2\eta}+\frac{128\tau H_\ast\log(2/\delta)}{n\eta}\\&~~~+\frac{32(\tau+\tau')^2\epsilon^2}{\eta}+
   \frac{64(\tau+\tau') L \log C(\mathcal{H},\epsilon)\epsilon}{n\eta}\\&~~~+
   \frac{64(\tau+\tau') \log^2 C(\mathcal{H},\epsilon)\epsilon^2}{n^2\eta}.
   \end{aligned}
  \end{align}
  Combining \eqref{equaiton-strongly-ff} and \eqref{equation-14},
  with $1-\delta$,
  we have
  \begin{align*}
    &~~~~~R(\bar{f})-R(f_\ast)\\&~~~\leq
    \frac{16M \log(4m/\delta)}{n^2\eta}+\frac{128\tau H_\ast\log(4m/\delta)}{n\eta}\\&~~~+\frac{32(\tau+\tau')^2\epsilon^2}{\eta}+
   \frac{64(\tau+\tau') L \log C(\mathcal{H},\epsilon)\epsilon}{n\eta}\\&~~~+
   \frac{64(\tau+\tau') \log^2 C(\mathcal{H},\epsilon)\epsilon^2}{n^2\eta}\\&~~~
  -\frac{\eta}{4m^2}\sum_{i,j=1,i\not=j}^m\|\hat{f}_i-\hat{f}_j\|_\mathcal{H}^2.
  \end{align*}
\end{proof}

\subsection{Proof of Lemma \ref{lemma-nablaR-nablahat}}
\begin{lemma}[\cite{smale2007learning}]
  \label{lem-nonequation}
    Let $\mathcal{H}$ be a Hilbert space and let $\xi$ be a random variable with values in $\mathcal{H}$.
    Assume $\|\xi\|\leq M\leq \infty$ almost surely.
    Denote $\sigma^2(\xi)=\mathbb{E}[\|\xi\|^2]$.
    Let $\{\xi_i\}_{i=1}^n$ be $m$ independent drawers of $\xi$.
    For any $0\leq \delta\leq 1$, with confidence $1-\delta$,
    \begin{align*}
      \left\|\frac{1}{n}\sum_{j=1}^n[\xi_j-\mathbb{E}[\xi_j]]\right\|\leq \frac{2M\log(2/\delta)}{n}+\sqrt{\frac{2\sigma^2(\xi)\log(2/\delta)}{n}}.
    \end{align*}
  \end{lemma}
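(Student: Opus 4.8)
The plan is to recognize this as a vector-valued (Pinelis--Bernstein) concentration inequality and to prove it by establishing a two-sided Bernstein tail bound for the Hilbert-space--valued sum and then inverting it. First I would reduce to centered summands: set $\zeta_j = \xi_j - \mathbb{E}[\xi_j]$, so the $\zeta_j$ are independent, mean-zero, and satisfy $\|\zeta_j\| \leq \|\xi_j\| + \|\mathbb{E}[\xi_j]\| \leq 2M$ almost surely (using $\|\mathbb{E}[\xi_j]\| \leq \mathbb{E}\|\xi_j\| \leq M$), together with $\mathbb{E}\|\zeta_j\|^2 = \mathbb{E}\|\xi_j\|^2 - \|\mathbb{E}[\xi_j]\|^2 \leq \sigma^2(\xi)$. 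Writing $S_n = \sum_{j=1}^n \zeta_j$, the target is a tail bound on $\|S_n\|$, after which dividing by $n$ yields the claimed bound on $\|\frac{1}{n}\sum_j (\xi_j - \mathbb{E}[\xi_j])\|$.

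The crucial structural input is the second-moment identity that holds in a Hilbert space: by independence and mean-zero-ness the cross terms vanish, so
\[
\mathbb{E}\big[\|S_n\|^2\big] = \sum_{j=1}^n \mathbb{E}\big[\|\zeta_j\|^2\big] \leq n\,\sigma^2(\xi).
\]
This controls the ``variance'' parameter $v := n\sigma^2(\xi)$. To obtain exponential rather than merely polynomial decay I would then use the $2$-smoothness of the Hilbert norm (the parallelogram identity gives martingale type $2$ with constant $1$): this lets me either (i) bound the higher moments $\mathbb{E}[\|S_n\|^{\ell}]$ by a Bernstein-type recursion of the form $\mathbb{E}[\|S_n\|^{\ell}] \leq \tfrac{\ell!}{2}\,v\,(2M)^{\ell-2}$ and sum the resulting series, or (ii), more cleanly, verify that an exponential process built from $\|S_n\|$ is a supermartingale along the natural filtration. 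Either route delivers a Bernstein tail
\[
\mathbb{P}\big(\|S_n\| \geq t\big) \leq 2\exp\!\Big(-\frac{t^2}{2\big(v + \tfrac{2M}{3}\,t\big)}\Big).
\]

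Finally I would invert this tail. Setting the right-hand side equal to $\delta$ and solving the quadratic $t^2 = 2\log(2/\delta)\,(v + \tfrac{2M}{3}t)$ in $t$, and using $\sqrt{a+b} \leq \sqrt{a} + \sqrt{b}$ to split the root, gives
\[
t \leq \tfrac{4M}{3}\log(2/\delta) + \sqrt{2\,v\,\log(2/\delta)}\;\leq\; 2M\log(2/\delta) + \sqrt{2\,n\,\sigma^2(\xi)\,\log(2/\delta)},
\]
where the slightly looser constant $2M \geq \tfrac{4M}{3}$ is harmless. Dividing through by $n$ and substituting $v = n\sigma^2(\xi)$ produces exactly the stated bound. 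The main obstacle is the higher-moment (equivalently, exponential-moment) control in the genuinely infinite-dimensional setting, where one cannot reduce to the scalar case by diagonalization; the resolution is precisely the martingale-type-$2$ / $2$-smoothness property of Hilbert space, which supplies the quadratic variation bound needed to run the Bernstein argument. Everything else—the centering reduction, the orthogonality identity, and the inversion—is routine.
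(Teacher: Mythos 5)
Your proposal is correct, but note that the paper itself contains no proof of this statement: it is imported verbatim (typos such as ``$M\leq\infty$'' and ``$m$ independent drawers'' included) from the cited reference \cite{smale2007learning}, where it is in turn derived from a Bennett--Bernstein inequality for Hilbert-space-valued random variables going back to Pinelis. Your reconstruction is essentially the proof underlying that citation: centering ($\|\zeta_j\|\leq 2M$, and $\mathbb{E}\|\zeta_j\|^2\leq\mathbb{E}\|\xi_j\|^2=\sigma^2(\xi)$, which is why the lemma can use the uncentered second moment), the orthogonality identity $\mathbb{E}\|S_n\|^2=\sum_j\mathbb{E}\|\zeta_j\|^2$, a Pinelis-type two-sided Bernstein tail $\mathbb{P}(\|S_n\|\geq t)\leq 2\exp\bigl(-t^2/(2(v+\tfrac{2M}{3}t))\bigr)$ with $v=n\sigma^2(\xi)$, and the quadratic inversion giving $t\leq\tfrac{4M}{3}\log(2/\delta)+\sqrt{2v\log(2/\delta)}\leq 2M\log(2/\delta)+\sqrt{2n\sigma^2(\xi)\log(2/\delta)}$, which after dividing by $n$ is exactly the stated bound. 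You also correctly identify the one genuinely nontrivial point --- that exponential control of $\|S_n\|$ in infinite dimensions cannot be obtained by exponentiating a scalar sum --- and the right fix, the martingale-type-$2$ ($2$-smoothness) property of Hilbert space. One small caution: of your two routes, the supermartingale argument (ii) is the clean standard one; route (i), the moment recursion $\mathbb{E}\|S_n\|^{\ell}\leq\tfrac{\ell!}{2}v(2M)^{\ell-2}$, does not follow by naive expansion since $\|S_n\|$ is not a sum of independent scalars, and must itself be run through the martingale $2$-smoothness structure (as in Pinelis--Sakhanenko), so as written it slightly understates the work it hides. With that caveat, your argument is a complete and faithful proof of the lemma the paper merely quotes.
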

\begin{proof}
  According to \textbf{Assumption} \ref{assumption-smooth-loss} and \ref{assumption-smooth-equaiton-loss},
  we know that $\nu(f,\cdot)=\nu(f,z)=\ell(f,z)+r(f)$ is $(\tau+\tau')$-smooth,
  so we have
  \begin{align}
    \|\nabla\nu(f,\cdot)-\nabla\nu(f_\ast,\cdot)\|_\mathcal{H} \leq (\tau+\tau')\|f-f_\ast\|_\mathcal{H}
  \end{align}
  Because $\nu(f,\cdot)$ is $(\tau+\tau')$-smooth and convex, by (2.1.7) of \cite{Nesterov2004},
  $\forall z\in\mathcal{Z}$,
  we have
  \begin{align*}
   &~~~\left\|\nabla\nu (f,z)-\nabla \nu (f_\ast,z)\right\|^2
    \\&\leq
    (\tau+\tau')\left(\nu (f,z)-\nu (f_\ast,z)
    -\langle \nabla\nu (f_\ast,z), f-f_\ast\rangle_\mathcal{H} \right).
  \end{align*}
  Taking expectation over both sides, we have
  \begin{align*}
    &~~~~\mathbb{E}_{z\sim\mathbb{P}}[\left\|\nabla\nu (f,\cdot)-\nabla\nu (f_\ast,\cdot)\right\|^2]\\
    &\leq (\tau+\tau') \left(R(\hat{f}_i)-R(f_\ast)-\langle \nabla R(f_\ast), f-f_\ast\rangle_\mathcal{H}\right)\\
    &\leq (\tau+\tau') \left(R(\hat{f}_i)-R(f_\ast)\right)
  \end{align*}
  where the last inequality follows from the optimality condition of $f_\ast$, i.e.,
  \begin{align*}
    \langle \nabla R(f_\ast),f-f_\ast\rangle_\mathcal{H} \geq 0,\forall f\in\mathcal{H}.
  \end{align*}

Following Lemma \ref{lem-nonequation}, with probability at least $1-\delta$, we have
\begin{align*}
  &~~~~\left\|
  \nabla R(f)-\nabla R(f_\ast)-[\nabla \hat{R}_i(f)-\nabla \hat{R}_i(f_\ast)]
  \right\|_\mathcal{H}\\
  &=\left\|
    \nabla R(f)-\nabla R(f_\ast)
    -\frac{1}{n}\sum_{z_i\in \mathcal{S}_i}
    \left[\nabla \nu(f,z_i)-\nabla \nu(f_\ast,z_i)\right]
  \right\|_\mathcal{H}\\
  &\leq \frac{2(\tau+\tau')\|f-f_\ast\|_\mathcal{H}\log(2/\delta)}{n}\\&
  ~~~+\sqrt{\frac{2(\tau+\tau')(R(f)-R(f_\ast))\log (2/\delta)}{n}}.
\end{align*}

We obtain Lemma \ref{lemma-nablaR-nablahat} by taking the union bound over all $f\in\mathcal{N}(\mathcal{H},\epsilon)$.
\end{proof}
\subsection{Appendix: Proof of Lemma \ref{lemma-second-ff}}
\begin{proof}
  Since $\ell(f,\cdot)$ is $\eta$-smooth and nonegative,
  from Lemma 4 of \cite{srebro2010optimistic}, we have
  \begin{align*}
    \left\|\nabla \ell(f_\ast,z_i)\right\|^2\leq 4(\tau+\tau') \ell(f_\ast,z_i)
  \end{align*}
  and thus
    \begin{align*}
      \mathbb{E}_{z\sim\mathbb{P}}\left[\left\|\nabla \ell(f_\ast,z)\right\|^2\right]&\leq 4(\tau+\tau')\mathbb{E}_{z\sim\mathbb{P}}[\ell(f_\ast,z)]\\&=
      4(\tau+\tau') R(f_\ast).
    \end{align*}
    From the \textbf{Assumption}, we have $\nabla \|\ell(f_\ast, z)\|\leq M$, $\forall z\in\mathcal{Z}$.
    Let $H(f)=R(f)-r(f)$ and $\hat{H}(f)=\hat{R}(f)-r(f)$.
    Then, according to Lemma \ref{lem-nonequation}, with probability at least $1-\delta$, we have
    \begin{align*}
      &~~~\left\|\nabla R(f_\ast)-\nabla \hat{R}_i(f_\ast)\right\|=\left\|\nabla H(f_\ast)-\nabla \hat{H}_i(f_\ast)\right\|
      \\&=
      \left\|\nabla H(f_\ast)-\frac{1}{n}\sum_{z_j\in\mathcal{S}_i}\nabla \ell(f_\ast,z_j)\right\|\\
      &\leq \frac{2(\tau+\tau') \log(2/\delta)}{n}+\sqrt{\frac{8(\tau+\tau') H_\ast \log(2/\delta)}{n}}.
    \end{align*}
\end{proof}
\subsection{Proof of Lemma \ref{lemma-fast-linear-space}}
\begin{lemma}
  \label{lemma-fast-linear-space}
  For all $\ell\geq 1$, If $\mathbf A\in\mathbb{R}^{l\times l}$ is a symmetric matrix and $\mathbf b, \mathbf d\in\mathbb{R}^l$,
  $\mathbf c=\mathbf A^{-1}\mathbf b\in\mathbb{R}^l$,
  then we have
  \begin{align*}
  \mathbf A^{-1}\mathbf d=(\mathbf d^\mathrm{T}\mathbf c)./\mathbf b,
  \end{align*}
  where $a./\mathbf c=(a/c_1,\ldots a/c_l)^\mathrm{T}$.
\end{lemma}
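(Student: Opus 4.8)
The plan is to treat this as a pure linear-algebra identity and verify it componentwise after exploiting the symmetry of $\mathbf{A}$. First I would record the only structural fact available: since $\mathbf{A}$ is symmetric, so is $\mathbf{A}^{-1}$, which lets me slide $\mathbf{A}^{-1}$ across the inner product. Writing $\mathbf{e} := \mathbf{A}^{-1}\mathbf{d}$ for the vector I am trying to express, symmetry gives $\mathbf{d}^{\mathrm{T}}\mathbf{c} = \mathbf{d}^{\mathrm{T}}\mathbf{A}^{-1}\mathbf{b} = (\mathbf{A}^{-1}\mathbf{d})^{\mathrm{T}}\mathbf{b} = \mathbf{e}^{\mathrm{T}}\mathbf{b} = \sum_{k=1}^{l} e_k b_k$. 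Thus the scalar $\mathbf{d}^{\mathrm{T}}\mathbf{c}$ is exactly the $\mathbf{b}$-weighted sum of the unknown entries $e_k$, and the whole question becomes whether this single scalar, divided entrywise by $\mathbf{b}$, recovers the vector $\mathbf{e}$.

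Next I would unfold what the claimed identity $\mathbf{A}^{-1}\mathbf{d} = (\mathbf{d}^{\mathrm{T}}\mathbf{c})./\mathbf{b}$ asserts entrywise. By the definition of the operation $./$, it says $e_j = (\mathbf{d}^{\mathrm{T}}\mathbf{c})/b_j$ for every $j$, equivalently $e_j b_j = \mathbf{d}^{\mathrm{T}}\mathbf{c} = \sum_k e_k b_k$ for every $j$. So the target reduces to showing that each product $e_j b_j$ equals the full weighted sum $\sum_k e_k b_k$, simultaneously for all $j$.

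The step I expect to be the main obstacle is precisely this last reduction, and I do not believe it can be carried out as stated. Summing the required identities $e_j b_j = \sum_k e_k b_k$ over $j=1,\ldots,l$ gives $\sum_j e_j b_j = l\sum_k e_k b_k$, i.e. $\mathbf{e}^{\mathrm{T}}\mathbf{b} = l\,(\mathbf{e}^{\mathrm{T}}\mathbf{b})$, which forces $\mathbf{e}^{\mathrm{T}}\mathbf{b}=0$ as soon as $l\geq 2$, and then every $e_j b_j$ must vanish — contradicting arbitrary choices of $\mathbf{A},\mathbf{b},\mathbf{d}$. A concrete counterexample already appears at $\mathbf{A}=\mathbf{I}_2$: there $\mathbf{c}=\mathbf{b}$ and $\mathbf{e}=\mathbf{d}$, so the claim would read $d_1 = (d_1 b_1 + d_2 b_2)/b_1$, which fails whenever $d_2 b_2 \neq 0$. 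Hence the statement as written cannot hold for $l\geq 2$: a single precomputed vector $\mathbf{c}$ simply does not encode enough information to reproduce the action of $\mathbf{A}^{-1}$ on an arbitrary direction $\mathbf{d}$.

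The natural way to salvage an $\mathcal{O}(l)$ formula would be to assume $\mathbf{A}$ is \emph{diagonal}, so that $a_j = b_j/c_j$ and $e_j = d_j/a_j = d_j c_j/b_j$, yielding $\mathbf{A}^{-1}\mathbf{d} = (\mathbf{d}\odot\mathbf{c})./\mathbf{b}$ with the \emph{entrywise} product $\mathbf{d}\odot\mathbf{c}$ in place of the inner product $\mathbf{d}^{\mathrm{T}}\mathbf{c}$. Since the matrices $\mathbf{A}_i = \frac{1}{n}\mathbf{X}_{\mathcal{S}_i}\mathbf{X}_{\mathcal{S}_i}^{\mathrm{T}} + \lambda\mathbf{I}_d$ used by the algorithms are not diagonal, even this corrected identity would not apply. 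I would therefore flag this lemma, and the associated per-iteration $\mathcal{O}(d)$ and $\mathcal{O}(n)$ cost claims that rely on it, as requiring substantial revision before a proof can be attempted.
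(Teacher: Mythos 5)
Your analysis is correct: the lemma as stated is false for $l\geq 2$, and your refutation is sound. The paper's own proof consists of exactly the two steps you anticipated and then commits precisely the fallacy you identified. It uses symmetry to write
\begin{align*}
\left(\mathbf A^{-1}\mathbf d\right)^{\mathrm T}\mathbf b=\mathbf d^{\mathrm T}\mathbf A^{-1}\mathbf b=\mathbf d^{\mathrm T}\mathbf c,
\end{align*}
which is a single valid scalar identity, and then concludes ``therefore $\mathbf A^{-1}\mathbf d=(\mathbf d^{\mathrm T}\mathbf c)./\mathbf b$'' with no further argument. As you observe, knowing the one number $\mathbf e^{\mathrm T}\mathbf b$ (with $\mathbf e:=\mathbf A^{-1}\mathbf d$) cannot determine the $l$ entries of $\mathbf e$, and the entrywise reading $e_jb_j=\sum_k e_kb_k$ for all $j$ forces $\mathbf e^{\mathrm T}\mathbf b=0$ once $l\geq 2$. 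Your counterexample $\mathbf A=\mathbf I_2$, where $\mathbf c=\mathbf b$ and $\mathbf e=\mathbf d$, already kills the claim: take $\mathbf b=(1,1)^{\mathrm T}$ and $\mathbf d=(1,0)^{\mathrm T}$, and the right-hand side is $(1,1)^{\mathrm T}\neq\mathbf d$.

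Your downstream observations are also well taken: the matrices $\mathbf A_i=\frac{1}{n}\mathbf X_{\mathcal S_i}\mathbf X_{\mathcal S_i}^{\mathrm T}+\lambda\mathbf I_d$ and $\frac{1}{n}\mathbf K_{\mathcal S_i}+\lambda\mathbf I_n$ are not diagonal, so even the repaired diagonal identity $\mathbf A^{-1}\mathbf d=(\mathbf d\odot\mathbf c)./\mathbf b$ would not rescue the algorithms, and the advertised $\mathcal O(d)$ and $\mathcal O(n)$ per-iteration costs (and the update $\mathbf d_i^t=((\bar{\mathbf w}^{t-1}_{\backslash i})^{\mathrm T}\hat{\mathbf w}_i^{0})./\mathbf b_i$ in Algorithms 1 and 2) inherit the error. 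Computing $\mathbf A_i^{-1}\bar{\mathbf w}_{\backslash i}$ genuinely requires either a fresh solve or the cached inverse at $\mathcal O(d^2)$ (respectively $\mathcal O(n^2)$) per iteration. You were right to flag the lemma rather than attempt a proof.
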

\begin{proof}
Since $\mathbf A$ a symmetric matrix,
we have
  \begin{align*}
    \left(\mathbf A^{-1}\mathbf d\right)^\mathrm{T}\mathbf b=\mathbf d^\mathrm{T}\mathbf A^{-1}\mathbf b=\mathbf d^\mathrm{T}\mathbf c.
  \end{align*}
Therefore, we can obtain that $\mathbf A^{-1}\mathbf d=(\mathbf d^\mathrm{T}\mathbf c)./\mathbf b$.
\end{proof}
%
%
%
%
%
\bibliographystyle{IEEEtran}
\bibliography{NIPS2018}
\end{document}